\documentclass{article}

\PassOptionsToPackage{numbers,sort&compress}{natbib}

\usepackage[preprint]{neurips_2026}
\usepackage{amsmath,amssymb,amsthm}
\usepackage{mathtools}
\usepackage{booktabs}
\usepackage{microtype}
\usepackage{graphicx}
\usepackage{hyperref}
\usepackage{cleveref}
\usepackage{xcolor}
\usepackage{xspace}
\usepackage{bbm}
\usepackage{tcolorbox}
\tcbuselibrary{breakable, skins}

\tcbset{
  promptbox/.style={
    enhanced,
    breakable,
    colback=gray!7,
    colframe=gray!40,
    boxrule=0.4pt,
    arc=3pt,
    left=6pt, right=6pt, top=4pt, bottom=4pt,
    fontupper=\small\ttfamily,
    before upper={\setlength{\parskip}{2pt}},
  }
}

\newcommand{\bottomtablecaption}[1]{\vspace{5pt}\caption{#1}}

\theoremstyle{plain}
\newtheorem{theorem}{Theorem}[section]
\newtheorem{proposition}[theorem]{Proposition}

\newtheorem*{theoremA}{Theorem A}
\newtheorem*{theoremB}{Theorem B}
\makeatletter
\newcommand{\labeltheoremA}[1]{%
  \begingroup
  \def\@currentlabel{A}%
  \def\@currentlabelname{Theorem A}%
  \label{#1}%
  \endgroup
}
\newcommand{\labeltheoremB}[1]{%
  \begingroup
  \def\@currentlabel{B}%
  \def\@currentlabelname{Theorem B}%
  \label{#1}%
  \endgroup
}
\makeatother

\theoremstyle{definition}

\newtheorem{assumption}{Assumption}

\theoremstyle{remark}
\newtheorem{remark}[theorem]{Remark}

\newcommand{\Hcal}{\mathcal{H}}
\newcommand{\Gcal}{\mathcal{G}}
\newcommand{\Pp}{\mathbb{P}^{\pi}}
\newcommand{\Ep}{\mathbb{E}_{\mathbb{P}^{\pi}}}

\DeclareRobustCommand{\TPS}{\ensuremath{\mathrm{TPS}}\xspace}
\DeclareRobustCommand{\TPSlog}{\ensuremath{\mathrm{TPS}_{\log}}\xspace}
\DeclareRobustCommand{\TPSBrier}{\ensuremath{\mathrm{TPS}_{\mathrm{Brier}}}\xspace}
\DeclareRobustCommand{\TPSBeta}[2]{\ensuremath{\mathrm{TPS}_{\beta(#1,#2)}}}
\DeclareRobustCommand{\TPSobs}{\ensuremath{\mathrm{TPS}^{\mathrm{obs}}}\xspace}
\DeclareRobustCommand{\TPScenabs}{\ensuremath{\mathrm{TPS}^{\mathrm{cen,abs}}}\xspace}
\DeclareRobustCommand{\TPScenexact}{\ensuremath{\mathrm{TPS}^{\mathrm{cen,exact}}}\xspace}
\DeclareRobustCommand{\TPScensimple}{\ensuremath{\mathrm{TPS}^{\mathrm{cen,simple}}}\xspace}

\newcommand{\etal}{\textit{et al.}\xspace}

\title{Proper Scoring Rules for Agentic Uncertainty Quantification}
 
\author{%
  Suresh Raghu\thanks{Equal contribution.} \\
  Independent Researcher \\
  \texttt{sureshraghu0706@gmail.com} \\
  \And
  Satwik Pandey\footnotemark[1] \\
  Independent Researcher \\
  \texttt{psatwik2711@gmail.com} \\
  \And
  Shashwat Pandey \\
  Independent Researcher \\
  \texttt{shashwat1225@gmail.com} \\
}

\begin{document}
\maketitle

\begin{abstract}
Language-model agents increasingly emit uncertainty signals
throughout a trajectory, but existing agentic UQ evaluations often conflate
ranking usefulness with probabilistic truthfulness. AUROC, AUPRC,
risk--coverage, Trajectory ECE, and scalarized trajectory scores evaluate
discrimination, binwise calibration, or collapsed summaries, but do not
strictly elicit the full prefix-conditioned success-probability trace
$q_t=\mathbb{P}^{\pi}(Y{=}1\mid\mathcal{H}_t)$. Building on prequential
proper scoring, we introduce the Trajectory Proper Score (\TPS), a
predictor-agnostic family of strictly proper trajectory-level scoring rules
for any per-step uncertainty signal calibrated into a probability of eventual
success. We prove that \TPS strictly elicits the success-probability process
under complete observation, within the chosen score family and weight
schedule. We extend the construction to administratively censored
trajectories by projecting the complete-data score onto the observable stopped
prefix, yielding an exact $q_Z$-weighted reduced score and a tractable
approximation when $q_Z$ is unestimated. We further show that common trajectory
evaluators target weaker objects than the full prefix-conditioned probability
process: Trajectory ECE is resolution-blind, while scalarized Trajectory
Brier elicits only the collapsed scalar, not the full trace. Experiments on
StrategyQA, Tau2-Bench, HotpotQA, and WebShop show that these theoretical
distinctions are operationally visible: probability recalibration can
substantially change \TPS while leaving rank metrics nearly unchanged, and the
tractable censored approximation can change the verdict relative to
complete-only evaluation.
\end{abstract}

\section{Introduction}
\label{sec:intro}
 
Language-model agents increasingly act through extended trajectories:
they observe, reason, call tools, revise plans, and eventually either
succeed or fail. In such settings, uncertainty is not a property of the
final answer alone. A useful agent should know, at each prefix of the
interaction, how likely the task is to succeed from that point onward.


Recent agentic UQ work has moved from final-answer uncertainty toward trajectory-level signals, including stepwise calibration, propagated uncertainty, and trajectory confidence aggregation \cite{Zhao2024,Duan2025,Zhang2026,Wang2025STeCa}. These methods mark an important shift, but they leave open a more basic evaluator-side question: what should count as a correct evaluation of such a trace?

We argue that the missing object is a strictly proper evaluator for the
prefix-conditioned probability process. Existing trajectory evaluations
typically rely on AUROC, trajectory-ECE, or scalarized trajectory-Brier.
These are useful diagnostics for particular summaries: AUROC measures
rank discrimination, ECE measures binwise marginal calibration, and
scalar Brier scores a collapsed binary probability. None of them,
however, strictly elicits the full process
$(F_t)_{t=1}^{T}$, where $F_t$ is the agent's reported probability of
eventual task success from the current prefix. This distinction matters:
two agents can induce similar rankings while assigning very different
probabilities, and downstream interventions such as deferral, reflection,
or human handoff depend on the probability scale, not only on rank.

The forecast object in this paper is deliberately simple. At step $t$,
after observing history $\mathcal H_t$, the agent reports
$F_t \in [0,1]$, interpreted as the probability that the task will
eventually succeed. The truthful target is
\[
q_t := \mathbb P^\pi(Y=1 \mid \mathcal H_t),
\]
the continuation-success probability under the evaluation law. This process
need not be monotone: a calibrated agent should lower $F_t$ after a harmful
action and raise it after acquiring useful evidence~\cite{Kirchhof2025}.
 
We make the following contributions.
\begin{enumerate}
  \item We formalize agentic uncertainty evaluation as strictly proper
      elicitation of the prefix-conditioned success process
      \(q_t=\mathbb P^\pi(Y=1\mid \mathcal H_t)\), and define the
      weighted trajectory score
      \[
      \TPS(F_{1:T},Y)
      =
      \sum_{t=1}^{T} w_t\, S_{\alpha,\beta}(F_t,Y).
      \]

  \item We extend the trajectory score to censored traces via conditional
        projection onto the observable prefix. In the binary agentic setting
        this reduces to an exact censored score parameterized by a single
        continuation-success weight $q_Z$.

  \item We show that common trajectory evaluators target weaker objects:
      trajectory-ECE is resolution-blind, and scalarized trajectory-Brier is
      proper only for a collapsed scalar rather than the full
      prefix-conditioned trace.

  \item Empirically, we show that recalibration can leave rank-based metrics
      nearly unchanged while strictly proper trajectory scores reveal large
      probability-scale errors, and that censored-aware scoring yields
      different conclusions from complete-only evaluation on administratively
      censored agent trajectories.
\end{enumerate}

The aim of this work is to separate the question of how an agent
produces uncertainty from the question of whether the reported
uncertainty means what it says. \TPS is agnostic to the
source of the signal: verbal confidence, token-probability features,
post-hoc calibrators, and predictor-side agentic UQ methods such as
SAUP, UProp, and AUQ can all be evaluated once expressed as
prefix-conditioned success probabilities.
 
\section{Related Work}
\label{sec:related}

Strictly proper scoring rules provide the classical language for evaluating
probabilistic forecasts. A proper score rewards truthful probability reporting,
and a strictly proper score makes the truthful report the unique optimum;
Gneiting and Raftery~\cite{Gneiting2007} characterize regular proper scores as
those generated by a convex function. In the binary case, proper scores admit
threshold-mixture representations~\cite{Schervish1989}; the beta family
\(S_{\alpha,\beta}\) uses this view to recover log and Brier scores as
special cases while allowing bounded asymmetric cost shaping~\cite{Buja2005}.
Calibration--resolution--uncertainty decompositions further show why marginal
calibration is not enough: a forecast can be reliable on average while carrying
little resolution about the outcome~\cite{Murphy1973,Degroot1983,Brocker2009}.
This distinction is central for agentic UQ, where an evaluator should reward
informative prefix-conditioned probabilities rather than only binwise
calibration. Our complete-data trajectory score applies this binary
scoring-rule machinery at each prefix and aggregates the resulting scores with
positive weights.

Proper scoring under censoring has been studied mainly in survival analysis,
where right-censored event times require observable-data extensions of
complete-data scores. Rindt \etal~\cite{Rindt2022} prove that a censored
log-score extension is strictly proper for right-censored survival times,
while Blanche \etal~\cite{Blanche2019} show that common survival metrics,
including C-index variants and integrated Brier-style scores, can fail
propriety. Yanagisawa~\cite{Yanagisawa2023} systematizes this view by
extending multiple proper scoring rules to censored observations and
identifying when the required censoring weights are valid. Our censored
extension borrows this conditional-projection view, but targets a different
forecast object: \(F_t=\Pr(Y=1\mid\mathcal H_t)\), the probability of one
fixed terminal outcome from the current agent prefix, rather than a cumulative
event-time distribution. Thus the trace may rise or fall as evidence,
mistakes, or recovery occur, unlike the monotone CDF target in standard
survival analysis.

Agentic uncertainty quantification has so far focused mainly on producing
uncertainty signals. Local predictors include verbalized confidence
\cite{Han2024}, token-level measures such as log-probability and predictive
entropy~\cite{Malinin2021,Duan2024}, and semantic entropy over sampled
responses~\cite{Kuhn2023}. More recent agentic methods move from local
signals to trajectory-level uncertainty: SAUP propagates step uncertainty with
situation-aware weights~\cite{Zhao2024}, UProp studies how uncertainty is
inherited across multi-step decision chains~\cite{Duan2025}, AUQ evaluates
trajectory confidence aggregators such as
\(\Phi_{\mathrm{last}}\), \(\Phi_{\mathrm{avg}}\), and
\(\Phi_{\min}\)~\cite{Zhang2026}, and STeCa uses step-level calibration
signals for agent learning~\cite{Wang2025STeCa}. These methods describe how
uncertainty signals are produced, propagated, or used; we ask how to evaluate
them once expressed as prefix-conditioned success probabilities.

Evaluation practice has not kept pace with the predictor side. UProp evaluates
uncertainty with AUROC and selective-prediction metrics such as AUARC, while
AUQ evaluates trajectory-level confidence aggregators with trajectory-ECE and
trajectory-Brier. These metrics are useful diagnostics, but they target weaker
objects: AUROC and AUPRC measure ranking, AUARC/AURC-style metrics measure
selective-prediction ordering, ECE measures binwise marginal calibration of a
scalar aggregate, and scalarized Brier is proper only for the aggregate
confidence value supplied to it. Static-classification work has already shown
that ECE is sensitive to binning and can obscure calibration structure
\cite{Kumar2020,Vaicenavicius2019}. Recent risk-control work is complementary:
it refines decision-control diagnostics, whereas \(\TPS\) targets strict
elicitation of the full prefix-conditioned probability trace
\cite{traub2024overcomingcommonflawsevaluation}. In agentic settings, the
issue is sharper because the trajectory is first aggregated before the
diagnostic is applied. We formalize this gap in Theorems~A and~B.
 
\section{Preliminaries}
\label{sec:prelim}

An agent executes a task over up to $T$ steps. At each step $t$, it observes
$o_t$, takes action $a_t$, and emits a scalar forecast $F_t\in[0,1]$. The
prefix history is
\[
h_t=(o_0,a_0,\ldots,o_t),
\]
and $\Hcal_t:=\sigma(H_t)$ is the natural history filtration,
$\Hcal_1\subseteq\cdots\subseteq\Hcal_T$. A valid step-$t$ forecast is
$\Hcal_t$-measurable. This filtration view is consistent with the standard
partially observable Markov decision process (POMDP) framing of agent
interaction \cite{Kaelbling1998}: $F_t$ can be viewed as the
agent's belief state marginalized onto the binary success/failure partition.
This is a prequential forecasting view~\citep{Dawid1984}: forecasts are
evaluated sequentially as information accumulates, but here each forecast
targets the same eventual binary outcome rather than a new observation at
each time step.

The forecast sequence is not a cumulative distribution function and need
not be monotone: a calibrated agent may lower its forecast after a harmful
action and raise it after acquiring useful evidence.

Fix an evaluation law $\mathbb P^\pi$ over full trajectories, induced by
the benchmark distribution, environment randomness, tool randomness, and
the fixed evaluation policy $\pi$. All conditional probabilities and
expectations below are taken under this law. For readability, we suppress
the superscript $\pi$ and write $\mathbb P$ and $\mathbb E$ unless policy
dependence is important. Let $Y\in\{0,1\}$ denote the terminal task outcome.
The truthful continuation-success process is
\begin{equation}
  q_t := \mathbb P(Y{=}1\mid \Hcal_t)
       = \mathbb E[Y\mid \Hcal_t].
  \label{eq:qt}
\end{equation}
Equivalently, $(q_t)_{t=1}^T$ is the martingale of the terminal outcome
with respect to the history filtration: its current value is the best
conditional prediction of the final outcome given the current prefix.
Operationally, $q_t$ is the continuation-success frequency one would obtain
by re-rolling many completions from the same prefix under the same evaluation
law. It is not the agent's reported confidence; it is defined by
$\mathbb P$. Under complete observation, $Y$ is observed at termination and
$Y_t=Y$ for all $t$: the target event does not change, only the information
available about it.
  
For censored trajectories, let $T^*$ denote the step at which the task
outcome becomes determined and let $C$ denote the exogenous termination
step, such as a fixed step budget or compute budget. Let $X$ denote the
task description or instance, which we take to be included in the initial
history $H_0$. We observe
\[
Z=\min(T^*,C), \qquad \Delta=\mathbf 1(T^*\le C).
\]
We write $z,\delta$ for realized values of $Z,\Delta$. If
$\Delta=1$, the trajectory is complete and $Y$ is observed. If
$\Delta=0$, the trajectory is administratively censored at prefix $Z$ and
$Y$ is unobserved.

Throughout, $H_t$ denotes the random history at step $t$, while $h_t$
denotes a realized history. The stopped history is denoted $H_Z$, and
$F_{1:Z}:=(F_1,\ldots,F_Z)$ denotes the forecast prefix observed before
termination. We write \(q_Z=\mathbb P^\pi(Y=1\mid\mathcal H_Z)\) for the
stopped random variable and \(q_z\) for its realized value on a trajectory
censored at step \(z\).

\begin{assumption}[Non-informative censoring]
\label{asm:noninformative}
\[
T^* \perp C \mid X .
\]
\end{assumption}

Assumption~\ref{asm:noninformative} requires that, conditional on the task
instance, the external termination mechanism is independent of when the
trajectory outcome would have been determined. This covers fixed step
budgets and fixed compute budgets. It is violated by adaptive
monitor-driven stopping, where a trajectory is stopped precisely because it
appears likely to fail.

\begin{assumption}[Administrative stop]
\label{asm:admin}
\[
Y \perp (C,\Delta)\mid \Hcal_Z .
\]
\end{assumption}

Assumption~\ref{asm:admin} requires that, once the stopped prefix is known,
the fact that administrative censoring occurred at step $Z$ carries no
additional information about the eventual outcome. In other words, the
observable prefix may be informative about success or failure, but the
administrative stopping event itself is not.

We use reward-oriented scoring rules, so larger scores are better. A binary
scoring rule \(S(p,y)\) is proper if, conditionally on \(\Hcal_t\),
\[
  \Ep[S(q_t,Y)\mid \Hcal_t]
  \ge
  \Ep[S(p,Y)\mid \Hcal_t]
\]
for every \(\Hcal_t\)-measurable forecast \(p\), and strictly proper if
equality implies \(p=q_t\) almost surely. We instantiate \(S\) with the
beta-family scores \(S_{\alpha,\beta}\) derived from the Schervish--Buja
threshold-mixture construction~\cite{Schervish1989,Buja2005}, using mixing
measure \(\nu(dc)=c^{\alpha-1}(1-c)^{\beta-1}\,dc\). The family is strictly
proper for all \(\alpha,\beta>0\) and includes Brier (up to positive affine
equivalence) and the log score as a limiting strictly proper endpoint;
asymmetric choices \(\alpha\neq\beta\) provide cost-shaping. Explicit
formulas, boundary behavior, and the proof are in Appendix~\ref{app:beta}.

\section{Proper Scoring Rules for Agentic UQ}
\label{sec:theory}

\subsection{Trajectory score under complete observation}
\label{sec:uncensored}
\label{sec:tps_complete}

Let $w_{1:T}$ be a fixed evaluator-chosen weight schedule over trajectory
steps, with $w_t>0$ and, for reporting convenience,
$\sum_{t=1}^{T}w_t=1$. These weights are not uncertainty predictions; they
encode which prefixes the evaluator wishes to emphasize. Uniform weights score every prefix equally, while front-loaded weights
emphasize early prefixes, where miscalibration can shape later actions,
observations, and opportunities for recovery~\cite{Dziri2023,Wang2025STeCa}.

We define the \emph{Trajectory Proper Score} (\TPS) as the weighted lift of a
strictly proper binary score to the prefix-conditioned forecast process:
\begin{equation}
  \TPS(F_{1:T},Y)
  =
  \sum_{t=1}^{T} w_t\,S_{\alpha,\beta}(F_t,Y).
  \label{eq:tps}
\end{equation}
The name is meant to emphasize that this is a family of trajectory-level
proper scores rather than a single fixed scoring rule: any strictly proper
binary score \(S_{\alpha,\beta}\), combined with any fixed positive weight
schedule \(w_t\), yields a strictly proper trajectory evaluator. We use
subscripts to denote particular members, such as \TPSlog, \TPSBrier, and
\TPSBeta{2}{4}.
Our default instantiation is the normalized linear front-loaded schedule
\[
w_t = \frac{2(T-t+1)}{T(T+1)}.
\]
For variable-length trajectories, \(T_i\) denotes the evaluated horizon of
trajectory \(i\): the realized terminal length for complete trajectories and
the administrative budget/observed stop length for naturally censored
max-step trajectories. The schedule \(w_{it}\) is constructed separately
within each \(T_i\) and normalized over \(t=1,\ldots,T_i\); in artificial
censoring, the original \(T_i\)-normalized weights are truncated at \(Z_i\)
and are not renormalized over the observed prefix.
Uniform weighting is included as a robustness check in the experiments. The
strict-propriety result below does not depend on this particular schedule;
it requires only that the weights are fixed exogenously and strictly
positive.

\begin{theorem}[Strict propriety under complete observation]
\label{thm:complete}
If $S_{\alpha,\beta}$ is strictly proper and $w_t>0$ for all $t$, then
\TPS is strictly proper for the prefix-conditioned success process:
\[
\Ep[\TPS(q_{1:T},Y)]
\ge
\Ep[\TPS(F_{1:T},Y)],
\]
with equality if and only if $F_t=q_t$ a.s.\ for every $t$.
\end{theorem}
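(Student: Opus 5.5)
The plan is to decompose the expected trajectory score into per-step terms, apply strict propriety of $S_{\alpha,\beta}$ conditionally at each prefix, and then recombine using the positivity of the weights. Linearity of expectation gives
\[
\Ep[\TPS(F_{1:T},Y)]=\sum_{t=1}^T w_t\,\Ep[S_{\alpha,\beta}(F_t,Y)],
\]
since the weights are fixed and exogenous. Each summand can then be treated separately. By the tower property, $\Ep[S_{\alpha,\beta}(F_t,Y)]=\Ep\bigl[\Ep[S_{\alpha,\beta}(F_t,Y)\mid\Hcal_t]\bigr]$.

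Because $F_t$ is $\Hcal_t$-measurable and $Y$ is binary, the conditional expectation has the explicit form
\[
\Ep[S_{\alpha,\beta}(F_t,Y)\mid\Hcal_t]=q_t\,S_{\alpha,\beta}(F_t,1)+(1-q_t)\,S_{\alpha,\beta}(F_t,0)=:\bar S(F_t,q_t).
\]
Here $\bar S(p,q)$ is the expected score of report $p$ under a Bernoulli$(q)$ outcome. Strict propriety of the binary rule, stated in the preliminaries and proved in Appendix~\ref{app:beta}, gives $\bar S(q,q)\ge \bar S(p,q)$ pointwise, with equality only if $p=q$. Applying this pointwise in $\omega$ gives the per-step inequality $\Ep[S_{\alpha,\beta}(q_t,Y)]\ge\Ep[S_{\alpha,\beta}(F_t,Y)]$. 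Multiplying by $w_t>0$ and summing over $t$ yields the stated inequality for \TPS.

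For the equality case, suppose the two expectations of \TPS are equal. The total gap is a positively weighted sum of the nonnegative quantities
\[
D_t:=\Ep[\bar S(q_t,q_t)-\bar S(F_t,q_t)]\ge 0.
\]
Since every $w_t>0$, equality forces $D_t=0$ for every $t$. The integrand is nonnegative, so $D_t=0$ means $\bar S(q_t,q_t)=\bar S(F_t,q_t)$ almost surely. Pointwise strict propriety then gives $F_t=q_t$ almost surely. The converse direction, that $F_t=q_t$ a.s. for all $t$ gives equality, is immediate.

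I expect the main obstacle to be integrability and boundary values, especially for the log endpoint and beta members with small $\alpha,\beta$, where $S_{\alpha,\beta}(p,y)$ can equal $-\infty$ at $p\in\{0,1\}$. I would handle this with extended-real arithmetic. Scores are bounded above, so expectations are well defined in $[-\infty,\infty)$. If $\Ep[S(F_t,Y)]=-\infty$ for some $t$, the inequality is strict provided the truthful score has finite expectation. That holds because the expected truthful score is the negative generalized entropy of $q_t$, which is bounded below for the beta family whenever the entropy is finite, and this is where I would cite the boundary analysis in Appendix~\ref{app:beta}.

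One subtlety remains in the pointwise argument. At $q_t\in\{0,1\}$, a report $F_t\neq q_t$ must still have strictly lower expected score; for example, at $q_t=1$ this requires that $S(p,1)$ is strictly increasing in $p$. This is part of the standard strict propriety statement for these scores, and I would verify it from the threshold-mixture representation. The mixing density $c^{\alpha-1}(1-c)^{\beta-1}$ is strictly positive on $(0,1)$, and this positivity gives strictness for every pair $p\neq q$.
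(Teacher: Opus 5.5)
Your proof is correct and follows the paper's argument: linearity over steps, conditional strict propriety at each $\mathcal{H}_t$ (your $\bar S(q_t,q_t)-\bar S(F_t,q_t)$ is exactly the paper's regret $R_t$), and the equality case from nonnegative regrets combined with strictly positive weights. Your extended-real integrability discussion goes slightly beyond the paper, but your premise is off: for all $\alpha,\beta>0$ the beta scores are bounded (e.g.\ $S_{\alpha,\beta}(0,1)=-B(\alpha,\beta+1)$ is finite), so only the logarithmic endpoint actually needs that treatment.
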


\noindent
The proof is by conditional strict propriety at each filtration level and
summing nonnegative regrets with positive weights; see
Appendix~\ref{app:complete-proof}.

In our experiments we report log, Brier, and Beta(2,4) as representative
members of the same proper family: log is the canonical unbounded default,
Brier is the bounded symmetric baseline, and Beta(2,4) is a bounded
asymmetric member.  Appendix~\ref{app:beta} also gives their cost-shaping
interpretation.

\subsection{Censoring as conditional projection}
\label{sec:abstract_censored}

We now extend \TPS to trajectories whose final outcome is not observed. Let
\[
\Gcal_Z := \sigma(H_Z,Z,\Delta,\Delta Y)
\]
be the sigma-field generated by the stopped trajectory, the censoring
indicator, and the observed terminal outcome on complete trajectories.
The term $\Delta Y$ records $Y$ when $\Delta=1$ and contributes no
outcome information when $\Delta=0$. If $Y$ is already included in
$H_Z$ on complete trajectories, this term is redundant.

For a stopped prefix ending at $Z$, define the observable-prefix version of
\TPS by
\[
{\TPSobs}_{\alpha,\beta}(F_{1:Z},Y;Z)
:=
\sum_{t=1}^{Z} w_t\,S_{\alpha,\beta}(F_t,Y).
\]
The weights $w_t$ are inherited from the full trajectory score and are not
renormalized over the observed prefix; this preserves comparability between
complete-only and censored-prefix scores.
 
The abstract censored TPS is the conditional expectation of this
observable-prefix complete-data score given the stopped trajectory:
\begin{equation}
  {\TPScenabs}_{\alpha,\beta}(F_{1:Z};\Gcal_Z)
  :=
  \Ep\!\left[
    {\TPSobs}_{\alpha,\beta}(F_{1:Z},Y;Z)
    \mid \Gcal_Z
  \right].
  \label{eq:abs_cen}
\end{equation}

\begin{theorem}[Abstract censored propriety on the observable prefix]
\label{thm:abstract_censored}
If the per-step score $S_{\alpha,\beta}$ is proper under complete
observation, then \({\TPScenabs}_{\alpha,\beta}\) is proper for the observable
prefix $F_{1:Z}$ under the stopped-data law. Strictness, when available, is
restricted to the observable prefix.
\end{theorem}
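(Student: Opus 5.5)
The argument is a tower-property reduction: the expected abstract censored score is the expected complete-data score on the observable prefix, and we then apply per-step propriety conditionally on $\Hcal_t$, exactly as in Theorem~\ref{thm:complete}. The forecasts $F_t$, $t\le Z$, are $\Hcal_t$-measurable and $H_Z$ is $\Gcal_Z$-measurable. Hence each $F_t\mathbf 1(t\le Z)$ is $\Gcal_Z$-measurable, and the forecast prefix can be pulled through the conditional expectation in \eqref{eq:abs_cen}.

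First, by the tower property,
\[
\Ep\!\left[{\TPScenabs}_{\alpha,\beta}(F_{1:Z};\Gcal_Z)\right]
=\Ep\!\left[{\TPSobs}_{\alpha,\beta}(F_{1:Z},Y;Z)\right]
=\sum_{t=1}^{T} w_t\,\Ep\!\left[\mathbf 1(Z\ge t)\,S_{\alpha,\beta}(F_t,Y)\right].
\]
This uses that $Z\le T$ and that the weights are fixed, not renormalized.

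Second, for each $t$, condition on $\Hcal_t$. The key fact needed is that $\{Z\ge t\}=\{Z\le t-1\}^c$ is $\Hcal_t$-measurable, so
\[
\Ep[\mathbf 1(Z\ge t)S(F_t,Y)]=\Ep\!\left[\mathbf 1(Z\ge t)\,\Ep[S(F_t,Y)\mid\Hcal_t]\right].
\]
Strictly, measurability needs $Z$ to be a stopping time of the history filtration. I would justify this as follows. $T^*$ is determined by the history, since the outcome becomes determined at an observed step. $C$ is a fixed budget given $X\subseteq H_0$, or more generally is revealed in the history. Alternatively, one can enlarge $\Hcal_t$ to include $\{C\ge t\}$ and use Assumption~\ref{asm:noninformative} to argue that $q_t$ is unchanged on $\{Z\ge t\}$. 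This measurability step is the main obstacle. I would first try the stopping-time formulation, and fall back on Assumptions~\ref{asm:noninformative}--\ref{asm:admin} to show that conditioning on survival to $t$ does not alter $\mathbb P(Y=1\mid\Hcal_t)$.

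Third, per-step propriety gives $\Ep[S(q_t,Y)\mid\Hcal_t]\ge\Ep[S(F_t,Y)\mid\Hcal_t]$ almost surely. Multiplying by the nonnegative weight $w_t\mathbf 1(Z\ge t)$, taking expectations and summing over $t$ shows that reporting $q_{1:Z}$ maximizes the expected censored score. This is propriety under the stopped-data law.

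Finally, for strictness, equality forces each nonnegative regret term $w_t\mathbf 1(Z\ge t)\{\Ep[S(q_t,Y)\mid\Hcal_t]-\Ep[S(F_t,Y)\mid\Hcal_t]\}$ to vanish almost surely. When $S_{\alpha,\beta}$ is strictly proper, this yields $F_t=q_t$ almost surely on $\{Z\ge t\}$ only. Nothing is identified for $t>Z$, which is the stated restriction of strictness to the observable prefix.
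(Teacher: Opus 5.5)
This is essentially the paper's argument: the tower property reduces $\mathbb{E}[\mathrm{TPS}^{\mathrm{cen,abs}}_{\alpha,\beta}]$ to the expected observable-prefix complete-data score, and complete-data propriety finishes it. Your decomposition $\sum_t w_t\mathbf 1(Z\ge t)S_{\alpha,\beta}(F_t,Y)$ unpacks the paper's one-line appeal to ``complete-data propriety applied to the observable prefix'' and correctly shows that this step needs $\{Z\ge t\}\in\mathcal{H}_t$ (i.e.\ $Z$ a stopping time), which the paper uses without stating it. Commit to the stopping-time justification (termination recorded in the history, budget fixed given $X\subseteq H_0$), because your fallback does not go through: Assumption~\ref{asm:noninformative} constrains only $(T^*,C)$ and Assumption~\ref{asm:admin} conditions only at the stopped time. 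Together they do not imply $\mathbb{P}(Y=1\mid\mathcal{H}_t,Z\ge t)=q_t$, which would instead require something like $C\perp(H_{1:T},Y)\mid X$.
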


\noindent
The proof is the conditional-expectation projection/tower-property argument;
see Appendix~\ref{app:censored-proof}.

\subsection{Exact reduced censored beta score}
\label{sec:cen_exact}

The preceding reduction yields a directly interpretable censored extension of
\TPS in the oracle case where the continuation-success weight $q_Z$ is
available. We write \TPScenexact and \TPScensimple for censored
extensions of \TPS: the superscript indicates how the unobserved outcome is
handled, while the subscript continues to indicate the binary score family.
We define
\begin{equation}
  \begin{aligned}
  \mathrm{TPS}^{\mathrm{cen,exact}}_{\alpha,\beta}(F_{1:Z};\Gcal_Z)
  =
  \sum_{t=1}^{Z} w_t
  \Bigl[
    &\Delta\,S_{\alpha,\beta}(F_t,Y) \\
    &+
    (1-\Delta)
    \Bigl(
      q_Z S_{\alpha,\beta}(F_t,1)
      +
      (1-q_Z)S_{\alpha,\beta}(F_t,0)
    \Bigr)
  \Bigr].
  \end{aligned}
  \label{eq:cen_exact}
\end{equation}
Here $q_Z:=\Pp(Y=1\mid\Hcal_Z)$ is the stopped
continuation-success process defined in Section~\ref{sec:abstract_censored}.

\begin{theorem}[Exact reduced censored beta score]
\label{thm:cen_exact}
Under Assumptions~\ref{asm:noninformative}--\ref{asm:admin},
\(\mathrm{TPS}^{\mathrm{cen,exact}}_{\alpha,\beta}\) is proper for every proper beta-family member.
If $S_{\alpha,\beta}$ is strictly proper and $w_t>0$ for all observed
steps, then \(\mathrm{TPS}^{\mathrm{cen,exact}}_{\alpha,\beta}\) is strictly proper on the
observable prefix $F_{1:Z}$.
\end{theorem}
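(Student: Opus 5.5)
The plan is to show that $\mathrm{TPS}^{\mathrm{cen,exact}}_{\alpha,\beta}$ coincides almost surely with the abstract censored score ${\TPScenabs}_{\alpha,\beta}$ of \eqref{eq:abs_cen}. Propriety then follows directly from Theorem~\ref{thm:abstract_censored}. After that, we upgrade to strictness by repeating the per-step regret argument of Theorem~\ref{thm:complete} on the observable prefix.

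\textbf{Step 1: identify the conditional law of $Y$ given $\Gcal_Z$.} Each $F_t$ with $t\le Z$ is $\Hcal_t\subseteq\Hcal_Z$-measurable, and the weights are fixed. So by linearity
\[
\Ep[{\TPSobs}_{\alpha,\beta}\mid\Gcal_Z]=\sum_{t=1}^{Z}w_t\bigl[S_{\alpha,\beta}(F_t,1)\,\Pp(Y=1\mid\Gcal_Z)+S_{\alpha,\beta}(F_t,0)\,\Pp(Y=0\mid\Gcal_Z)\bigr].
\]
Here $Z$ is $\Gcal_Z$-measurable, so the random upper limit of the sum is harmless. We split on $\Delta$.
\begin{itemize}
\item On $\{\Delta=1\}$, $Y=\Delta Y$ is $\Gcal_Z$-measurable, so the conditional probability is $Y$ itself. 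This recovers $S_{\alpha,\beta}(F_t,Y)$.
\item On $\{\Delta=0\}$, $\Gcal_Z$ reduces to $\sigma(H_Z,Z,\Delta)$. Assumption~\ref{asm:admin} gives $\Pp(Y=1\mid H_Z,Z,\Delta=0)=\Pp(Y=1\mid\Hcal_Z)=q_Z$.
\end{itemize}
Assumption~\ref{asm:noninformative} serves to justify that $C$ (hence $\Delta$ and $Z$ given the prefix) carries no outcome information beyond $X\subseteq H_0$. It is what makes Assumption~\ref{asm:admin} plausible, and the stopped-law conditioning well defined. Plugging in yields exactly \eqref{eq:cen_exact}, so propriety for every proper beta member is Theorem~\ref{thm:abstract_censored}.

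\textbf{Step 2: strictness.} We compare the expected score of an arbitrary observable-prefix report $F_{1:Z}$ with that of the truthful report $q_{1:Z}$. By the tower property, the expected exact score equals $\Ep\bigl[\sum_{t\le Z}w_t S_{\alpha,\beta}(F_t,Y)\bigr]$. We write this as $\sum_t w_t\,\Ep[\mathbf 1(t\le Z)\,S_{\alpha,\beta}(F_t,Y)]$ and condition on $\Hcal_t$.

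The main obstacle is here. We need $\{t\le Z\}$ to be $\Hcal_t$-measurable (stopping-time property) and $\Pp(Y=1\mid\Hcal_t,\,t\le Z)=q_t$. We would argue that $T^*$ is a stopping time of the history and that $C$ is exogenous via Assumption~\ref{asm:noninformative}. Then, on $\{t\le Z\}$, the conditional regret $w_t\bigl(\Ep[S(q_t,Y)\mid\Hcal_t]-\Ep[S(F_t,Y)\mid\Hcal_t]\bigr)$ is nonnegative, and by strict propriety it vanishes only if $F_t=q_t$. If $C$ is not $\Hcal_t$-adapted, we would first condition on $C$ and use the independence to keep $q_t$ unchanged.

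Summing these nonnegative regrets with $w_t>0$, equality of expected scores forces $F_t=q_t$ a.s.\ on $\{t\le Z\}$ for every $t$, i.e.\ strictness on the observable prefix $F_{1:Z}$. No claim is made about unobserved steps.
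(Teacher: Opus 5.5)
Your proposal follows the paper's proof essentially step for step: you identify the posterior weight $\Delta Y+(1-\Delta)q_Z$ via Assumption~\ref{asm:admin} to show the exact score equals the abstract projection, invoke Theorem~\ref{thm:abstract_censored} for propriety, and obtain strictness from the nonnegative per-step regrets $\mathbf 1\{t\le Z\}\,w_t R_t(F_t)$. You are more explicit than the paper in requiring $\{t\le Z\}$ to be $\Hcal_t$-measurable, a step the paper uses without comment; this holds for the fixed step budgets considered, but your fallback for a random $C$ would need something like $C$ independent of the whole trajectory and $Y$ given $X$, which is more than Assumption~\ref{asm:noninformative} literally states.
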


\noindent
Appendix~\ref{app:censored-proof} proves the explicit reduced form, which
reduces at the logarithmic endpoint to the corresponding soft-label log score
on censored prefixes. Appendix~\ref{app:qz:hotpotqa} validates this exact
\(q_Z\)-weighted construction with Monte Carlo continuation rollouts.
 
\subsection{Simple censored score for operational evaluation}
\label{sec:cen_simple}

The exact reduced score requires the continuation-success weight $q_Z$ for
censored prefixes. When this weight is not estimated, we use the pessimistic
operational approximation
\[
q_Z \approx 0,
\]
which scores a censored prefix through the failure-side branch of the
binary score. Equivalently, this sets the censored posterior success weight
in~\eqref{eq:cen_exact} to zero:
\begin{equation}
  \mathrm{TPS}^{\mathrm{cen,simple}}_{\alpha,\beta}(F_{1:Z};\Gcal_Z)
  =
  \sum_{t=1}^{Z} w_t
  \left[
    \Delta\,S_{\alpha,\beta}(F_t,Y)
    +
    (1-\Delta)\,S_{\alpha,\beta}(F_t,0)
  \right].
  \label{eq:cen_simple}
\end{equation}
At the logarithmic endpoint, the same failure-side reduction gives the
simple censored-log score used as our primary operational censored
evaluator:
\begin{equation}
  \mathrm{TPS}^{\mathrm{cen,simple}}_{\log}(F_{1:Z};\Gcal_Z)
  =
  \sum_{t=1}^{Z} w_t
  \left[
    \Delta\{Y\log F_t+(1-Y)\log(1-F_t)\}
    +
    (1-\Delta)\log(1-F_t)
  \right].
  \label{eq:cen_simple_log}
\end{equation}

When interpreted as a proper score, \(\TPScensimple\) elicits the
pseudo-label target
\[
  m_t=\Ep[\Delta Y\mid\Hcal_t],
\]
not the original continuation-success target
\(q_t=\Pp(Y=1\mid\Hcal_t)\), unless the missing success mass is zero.
We therefore use \(\TPScensimple\) only as an explicit pessimistic
\(q_Z\approx0\) approximation to the exact reduced score. The logarithmic
instance is our primary operational censored evaluator on WebShop; Brier and
Beta-family instances are reported only as robustness checks.
Appendix~\ref{app:censored-proof} gives the pseudo-label target result and
proof.

\section{Evaluation Metrics for Agentic UQ}
\label{sec:metrics}

\subsection{Why common trajectory evaluators are insufficient}
\label{sec:neg_results}
 
We now organize common agentic UQ evaluators by the forecast object they
elicit: a rank ordering, a binwise-calibrated scalar, a collapsed
probability, or the full prefix-conditioned probability process.
Formally, Theorem~\ref{thm:A} shows that T-ECE can tie non-truthful
resolution-losing forecasts with the truthful process, while
Theorem~\ref{thm:B} shows that scalarized proper losses elicit only
\(C=\Phi(F_{1:T})\), not the full trace.
 
\noindent\textbf{AUROC and rank metrics.}
AUROC evaluates whether a scalar trajectory score ranks successful and
unsuccessful trajectories correctly. It is invariant under any strictly
monotone reparametrization of the scalar score
\(C=\Phi(F_{1:T})\) supplied to it~\cite{Hanley1982}. Thus AUROC can
distinguish useful ordering from useless ordering, but it cannot distinguish
well-scaled probabilities from overconfident or underconfident monotone
distortions. The same issue applies to other rank- or threshold-based
diagnostics such as AUPRC and risk--coverage/AURC: they are useful
discrimination diagnostics, but they do not elicit the probability trace.
 
\noindent\textbf{Trajectory ECE.}
Trajectory ECE~\cite{Zhang2026} collapses the forecast trace
to a scalar \(C=\Phi(F_{1:T})\) and checks whether empirical success rates
match average confidence within fixed bins. A forecast that is right on average
within every bin scores zero T-ECE, regardless of whether it separates
successful from failed trajectories within those bins. This distinction
matters: a proper forecast must not only be calibrated on average
(\emph{Reliability}) but also concentrate probability mass where outcomes
differ (\emph{Resolution}). T-ECE measures only the first component of this
calibration--resolution--uncertainty decomposition
\cite{Murphy1973,Brocker2009,Degroot1983} and is blind to the
second. Concretely, the truthful process \(q\) and its binwise average
\(G=\Ep[q\mid\operatorname{bin}(q)]\) can both achieve T-ECE\(=0\), even
though \(G\) discards all within-bin discriminative information.

\noindent\textbf{Trajectory Brier and scalarized proper scores.}
Brier and log loss are strictly proper for a single binary probability
forecast. When a trajectory is first collapsed to
\(C=\Phi(F_{1:T})\) and the proper score is applied only to \(C\), however,
the resulting scalarized score elicits the collapsed scalar, not the full
trace. Common AUQ-style aggregators, including \(\Phi_{\mathrm{last}}\),
\(\Phi_{\mathrm{avg}}\), \(\Phi_{\min}\), and weighted averages, can
therefore hide or reward non-truthful intermediate forecasts.
The scalarization result is sharper:
\(\Phi_{\mathrm{last}}\)-Brier is proper but not strictly proper for the
trace, while \(\Phi_{\mathrm{avg}}\)-, \(\Phi_{\min}\)-, and
\(\Phi_w\)-Brier can make the truthful trace strictly suboptimal.

Each evaluator above is well suited to what it measures: AUROC, AUPRC,
and AURC for rank discrimination; T-ECE for binwise marginal calibration;
and scalarized Brier/log scores for a collapsed scalar probability. The
gap is specific: none of them strictly elicits the full prefix-conditioned
success-probability process \((q_t)\). \(\TPS\) fills this gap under
complete observation; \(\mathrm{TPS}^{\mathrm{cen,exact}}_{\alpha,\beta}\) extends it to the
observable prefix under administrative censoring.

\subsection{Reporting convention}
\label{sec:reporting}
 
These diagnostics also have no native score for trajectories with
unobserved \(Y\): censored traces must be discarded, continued, or
mislabeled before they can be applied. On long-horizon tasks this can
mean evaluating only the cases the agent completes, precisely where
miscalibration is least likely to surface. A complete agentic UQ evaluation
reports \(\TPS\) for complete trajectories, or
\(\mathrm{TPS}^{\mathrm{cen,exact}}_{\alpha,\beta}\) (the exact \(q_Z\)-weighted reduced censored
score) for censored prefixes when \(q_Z\) is available, as the primary proper
score. When \(q_Z\) is not estimated, \(\mathrm{TPS}^{\mathrm{cen,simple}}_{\log}\) (the log endpoint
of \(\TPScensimple\)) is reported as an operational censored-log approximation.
AUROC, AUPRC, AURC, and T-ECE are reported as diagnostic
complements. Predictor calibration status, weight schedule, and, for
censored data, the censoring assumption and exclusion criteria should
always be disclosed.

\section{Experiments}
\label{sec:experiments}

The preceding sections show that common trajectory diagnostics and
\(\TPS\) answer different evaluation questions. We now ask whether this
difference is large enough to change empirical verdicts on real agent
traces. We use these predictor streams to probe evaluator behavior, not to rank uncertainty methods.

%

\noindent\textbf{Setup.}
Experiments use Gemma 4 31B in a fixed ReAct harness on
Tau2-Bench, StrategyQA, HotpotQA, and WebShop. We evaluate five per-step
probability streams: verbal confidence, completion-token probability,
completion-entropy confidence, action-span token probability, and action-span
entropy confidence, plus a base-rate reference. Primary analyses use log score
with linear-front weights and cross-fitted Platt calibration where reported.
Predictor formulas and transparency tables are in Appendix~\ref{app:transparency};
calibration is in Appendix~\ref{app:calibration}; preprocessing, prompts, and
hyperparameters are in Appendix~\ref{app:implementation}; and robustness sweeps
are in Appendix~\ref{app:robustness}.

\subsection{Calibration-invariance gap}
\label{sec:exp2a}

We test whether rank-based evaluation and proper trajectory scoring remain
aligned when a forecast stream is recalibrated. We take the model's verbal
confidences on Tau2-Bench, which are heavily saturated near
\(\{0.90,0.95,1.00\}\), and compare the raw stream with a Platt-recalibrated
version. The base-rate stream
\(F_t\equiv\bar y\) serves as an uninformative reference. If AUROC and
\(\TPSlog\) (the log endpoint of \(\TPS\), i.e.\ \(\alpha=\beta=0\)) led to
the same practical verdict, recalibration should not separate them by many
bootstrap standard errors.

Figure~\ref{fig:exp2a} shows the opposite. Let \(\Delta/\mathrm{SE}\)
denote the raw-to-Platt change divided by its paired bootstrap standard
error. Recalibration changes AUROC by only \(-0.010\)
(\(\Delta/\mathrm{SE}\approx0.3\)), while improving \(\TPSlog\) by
\(5.67\) nats (\(\Delta/\mathrm{SE}\approx43\)). The corresponding AUPRC
and AURC changes are also small relative to the \(\TPSlog\) shift. Under \(\TPSlog\), the
raw stream lies \(5.66\) nats below the base-rate reference; the Platt
stream lies \(0.008\) nats above it.
Appendix~\ref{app:fixed_rank} repeats the comparison with the rank-metric
input held fixed by construction; AUROC/AUPRC/AURC are identical across
transforms while \(\TPSlog\) spans 5.7 nats.
 
After calibration, verbal confidence barely exceeds the base-rate reference, improving probability scale but not discrimination. AUROC 
treats the raw and recalibrated streams as nearly equivalent because 
their rankings are nearly equivalent, while \(\TPSlog\) detects that one
stream is badly mis-scaled as a probability forecast. This distinction 
matters whenever \(F_t\) is consumed as a probability for deferral, 
thresholding, human handoff, or combination with a cost model.

\begin{figure}[t]
  \centering
  \includegraphics[width=\linewidth]{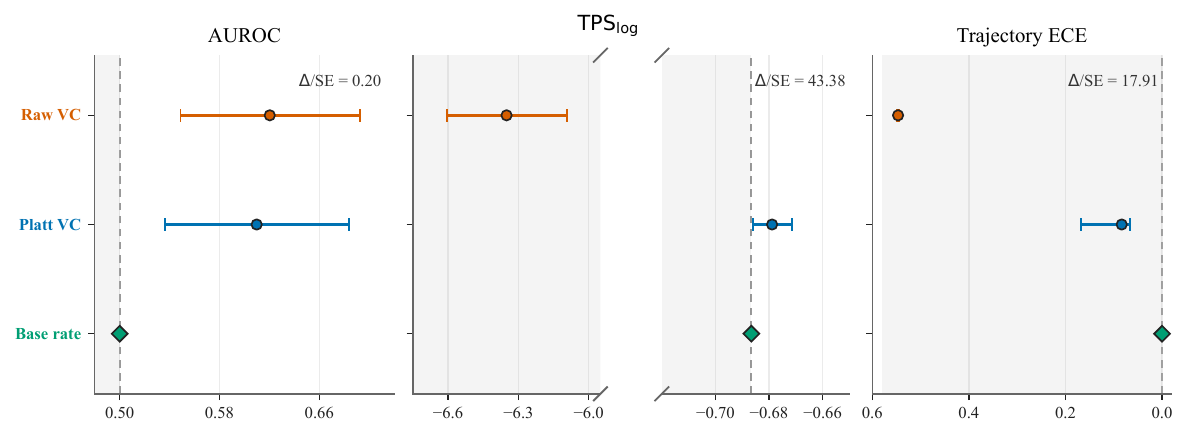}
  \caption{%
    Tau2-Bench calibration gap (\(n=201\)). Raw verbal confidence
    (orange), Platt-recalibrated confidence (blue), and the base-rate reference
    (green). Whiskers are 95\% bootstrap intervals; shaded regions are worse
    than base rate. Right is better; T-ECE is reversed and \(\TPSlog\) uses a
    broken axis.
  }
  \label{fig:exp2a}
\end{figure}
 
\subsection{Controlled censoring validation}
\label{sec:exp3s1}
\label{sec:exp3-stage1}

Before applying the censored estimator to natural truncation, we validate it
under artificial censoring on complete StrategyQA, Tau2-Bench, and HotpotQA
trajectories. Appendix~\ref{app:robustness} shows that the closed-form
prefix-swap/tail-omission decomposition matches the directly computed
\(\TPScensimple-\mathrm{TPS}^{\mathrm{complete}}\) difference to numerical
precision across score families, weight schedules, censoring rates, and
datasets. The validation also predicts the sign regime used to interpret
WebShop: low-confidence censored prefixes receive mild failure-branch
corrections, while overconfident censored prefixes are penalized sharply.
 
\subsection{Natural censoring on WebShop}
\label{sec:exp3s2}
\label{sec:exp3-stage2}

We next move from artificial censoring to naturally truncated agent traces.
WebShop is the relevant case: many trajectories hit the benchmark step
budget before an outcome is observed. Of 500 trajectories, 192 parse-error
truncations are excluded as informative censoring. These parse failures
arise from the agent's own malformed output and plausibly correlate with
task success, so treating them as administrative censoring would violate the
non-informative censoring assumption. The working sample contains \(n=308\)
trajectories: 163 completed trajectories with observed outcomes
(\(\delta=1\)) and 145 administrative max-step stops (\(\delta=0\)), for a
47.08\% administrative-censoring rate.

The censored subset is structurally different from the completed subset.
Admin-censored trajectories are \(3\times\) longer on average than completed
ones (30.0 vs.\ 9.88 observed steps), consistent with harder tasks exhausting
the step budget. Yet verbal confidence does not materially separate the two
subsets (\(\Delta\bar F=-0.007\)): the predictor is nearly flat along a
difficulty axis visible in the stopping pattern. Complete-only evaluation
therefore discards a large, structurally harder subset of the benchmark.
Figure~\ref{fig:exp3s2} shows this selection diagnostic alongside the
primary score shift. We report
\[
\Delta_{\mathrm{practice}}
=
\widehat{\mathrm{TPS}}_{\mathrm{censored\text{-}ext}}
-
\widehat{\mathrm{TPS}}_{\mathrm{complete\text{-}only}},
\]
with positive values meaning that censored-aware evaluation gives a higher
average score than complete-only evaluation.

Our main reported result uses Platt-calibrated verbal confidence scored with the log
rule and linear-front weights. Complete-only evaluation uses only the 163
completed trajectories and gives
$\widehat{\mathrm{TPS}}_{\mathrm{complete\text{-}only}}=-0.816$ nats; the
simple censored extension also includes the 145 max-step prefixes and gives
$\widehat{\mathrm{TPS}}_{\mathrm{censored\text{-}ext}}=-0.657$ nats.
The paired shift is $+0.159$ nats, with a 95\% bootstrap CI of
$[+0.133,+0.188]$. This is an evaluator effect, not an
improvement in task performance: the additional trajectories have unobserved
outcomes and are scored under the failure-side \(q_z\approx0\)
approximation. Normalized by the 47.08\% censoring rate, the shift is
\(+0.337\) nats, close to the controlled artificial-censoring value on Tau2
reported in Section~\ref{sec:exp3s1} (\(+0.404\) nats).

Across the score-family and weight-schedule sweep, the sign pattern is stable:
all non-reference predictor rows are positive except completion token
probability, the only predictor with \(\bar F>0.5\).  Under log score with
linear-front weights, this predictor receives a \(-2.07\) nat shift. The pattern is consistent with the artificial-censoring analysis in
Section~\ref{sec:exp3s1}: low-confidence censored prefixes receive a mild
failure-branch correction, while overconfident censored prefixes are
penalized sharply. Thus censored-aware scoring changes not only the average
score, but also how different confidence regimes are treated.
 
\begin{figure}[t]
  \centering
  \includegraphics[width=\linewidth]{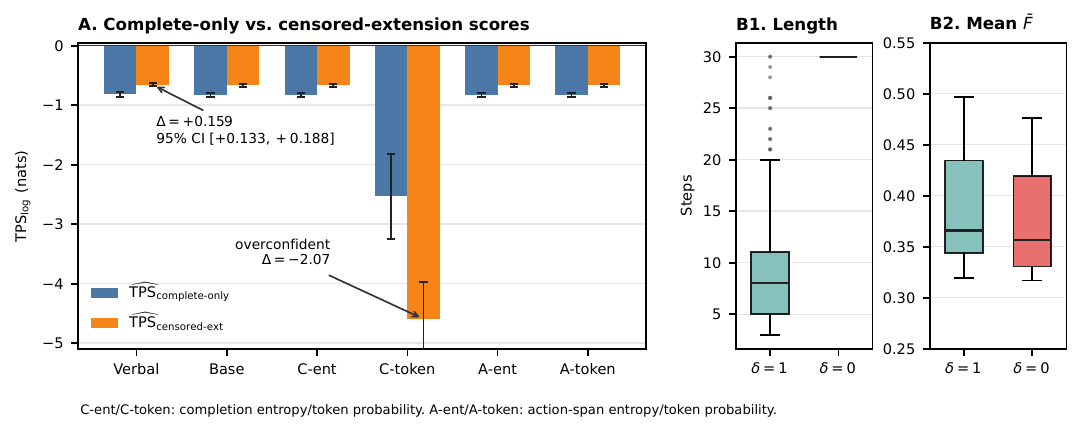}
  \caption{%
    WebShop natural censoring. Panel A compares complete-only and
    simple-censored scores; the primary verbal-confidence score shifts by
    \(+0.159\) nats with 95\% CI \([+0.133,+0.188]\). Panel B shows the
    selection diagnostic: admin-censored trajectories are longer, while verbal
    confidence barely separates completed and censored subsets. Whiskers are
    95\% bootstrap intervals.
  }
  \label{fig:exp3s2}
\end{figure}
  
\section{Conclusion and Limitations}
\label{sec:conclusion}

Evaluating agentic uncertainty requires scoring a probability trace, not only
summarizing it. Existing trajectory evaluators, including AUROC, T-ECE, and
scalarized T-Brier under common aggregators, elicit rank orderings, binwise
calibration, or collapsed scalars. None strictly elicits the
prefix-conditioned success process \((q_t)\) that a calibrated agent should
report. \(\TPS\), built from any strictly proper binary score and positive
trajectory weights, provides a trajectory-level evaluator that does.

Censoring further changes the evaluation population. When benchmarks impose
step budgets, complete-only evaluation silently discards truncated
trajectories. On WebShop, the administratively censored subset is structurally
different from the completed subset, and censored-aware scoring changes the
verdict by \(0.159\) nats with a 95\% confidence interval excluding zero.
Together with the Tau2-Bench recalibration result, where \(\TPSlog\) moves by
\(\Delta/\mathrm{SE}\approx43\) while AUROC moves by only
\(\Delta/\mathrm{SE}\approx0.3\), these results show that evaluator choice is
not cosmetic when \(F_t\) is consumed as a probability for deferral,
thresholding, reflection, or human handoff.

The framework has two main limitations. First, the censored extension requires
non-informative administrative censoring
(Assumptions~\ref{asm:noninformative}--\ref{asm:admin}); adaptive
monitor-driven stops and parse-error truncations violate these assumptions and
require informative-censoring methods such as IPCW or doubly robust
extensions. Second, the present construction targets binary terminal outcomes;
partial credit, graded success, multiple objectives, and vector-valued rewards
require extensions beyond the binary proper-score framework developed here.

Empirically, our trajectories use one model and one ReAct-style harness.
Broader validation across model families and harnesses is useful future work
for predictor-side coverage, but is separate from the evaluator-side propriety
claims established here.

\bibliographystyle{unsrtnat}
\bibliography{references}


\appendix

\section*{Appendix}

\section{Proof of Theorem~\ref{thm:complete} (Complete Observation)}
\label{app:proof_thm1}
\label{app:complete-proof}
 
We give the full proof of Theorem~\ref{thm:complete} and record three
remarks clarifying the filtration argument, the prequential interpretation,
and the role of the underlying binary scoring-rule family.

\begin{proof}
Let $F_{1:T}$ be any adapted forecast sequence with $F_t$
$\Hcal_t$-measurable, and let
\[
q_t := \Pp(Y{=}1\mid\Hcal_t).
\]
The trajectory weights $w_{1:T}$ are fixed by the evaluator and satisfy
$w_t>0$ for every $t$. By linearity of expectation,
\begin{equation}
  \Ep[\TPS(F_{1:T},Y)]
  =
  \sum_{t=1}^{T} w_t\,\Ep[S_{\alpha,\beta}(F_t,Y)].
  \label{eq:app_linear}
\end{equation}

Fix a step $t$. Since $F_t$ is $\Hcal_t$-measurable and
$Y\mid \Hcal_t$ has Bernoulli parameter $q_t$, conditional strict
propriety of the binary score gives
\begin{equation}
  R_t
  :=
  \Ep\!\left[
    S_{\alpha,\beta}(q_t,Y)
    -
    S_{\alpha,\beta}(F_t,Y)
    \mid \Hcal_t
  \right]
  \ge 0
  \quad \text{a.s.},
  \label{eq:app_Rt}
\end{equation}
with equality if and only if $F_t=q_t$ a.s.

Multiplying~\eqref{eq:app_Rt} by $w_t>0$, taking expectations, and
summing over $t$ yields
\begin{equation}
  \sum_{t=1}^{T} w_t\,\Ep[R_t]
  =
  \Ep[\TPS(q_{1:T},Y)]
  -
  \Ep[\TPS(F_{1:T},Y)]
  \ge 0.
  \label{eq:app_sum}
\end{equation}
Thus the truthful process $q_{1:T}$ maximizes expected trajectory score.

It remains to show uniqueness. If equality holds in~\eqref{eq:app_sum},
then each term in the nonnegative weighted sum must be zero:
$\Ep[R_t]=0$ for every $t$. Since $R_t\ge0$ a.s., this implies
$R_t=0$ a.s. for every $t$. By the equality condition in the per-step
strict propriety statement, $F_t=q_t$ a.s. for every $t$. Hence
$q_{1:T}$ is the unique maximizer of expected trajectory score, up to
almost-sure equivalence.
\end{proof}

\begin{remark}[Step dependence does not affect the proof]
\label{rem:step_dep}
The forecasts $(F_1,\ldots,F_T)$ may be arbitrarily dependent across
steps, and every step may target the same terminal outcome $Y$. The proof
does not require independent step labels or independent forecast errors.
The key point is conditional: at each filtration level $\Hcal_t$, the
reported forecast $F_t$ is measurable with respect to the information
available at that step, and the conditional law of $Y$ is Bernoulli with
parameter $q_t$. Strict propriety is therefore applied separately inside
each conditional expectation, and the resulting nonnegative regrets are
then aggregated with positive weights.
\end{remark}
 
\begin{remark}[Prequential interpretation]
\label{rem:prequential}
The construction is prequential in spirit: the evaluator scores a sequence
of probability forecasts as information accumulates along the interaction
history. It differs from classical one-step-ahead prequential forecasting
because all forecasts target the same terminal binary outcome $Y$, rather
than a sequence of distinct future observations. Accordingly,
\[
  q_t=\Ep[Y\mid\Hcal_t]
\]
is the truthful success probability given the information available at
step $t$. As the history grows, this conditional probability can increase
or decrease depending on what the agent observes or does. This is why no
monotonicity assumption is needed. A truthful success forecast may rise
after useful evidence or successful tool use, and may fall after a harmful
action or failed observation.
\end{remark}

\begin{remark}[Role of the binary scoring-rule family]
\label{rem:binary_family_appA}
Theorem~\ref{thm:complete} assumes that the per-step binary score
$S_{\alpha,\beta}$ is strictly proper. That property is established
for the beta family in Appendix~\ref{app:beta}. The proof above uses only
the consequence of that result: for any $\Hcal_t$-measurable forecast
$F_t$, the conditional expected score is uniquely maximized by reporting
$q_t=\Pp(Y{=}1\mid\Hcal_t)$.

Gneiting--Raftery gives the convex-function characterization of proper
scores; the binary specialization yields the Savage form; and the
Schervish threshold-mixture representation gives a convenient route to
strict propriety for beta-family scores whose mixing measure has full
support on $(0,1)$. The logarithmic endpoint is handled as the usual
strictly proper binary log score.
\end{remark}

\begin{remark}[Scope of the fixed-weight statement]
\label{rem:fixed_weight_scope}
The theorem is stated for a fixed evaluated horizon and fixed positive
evaluator weights. Length-specific reporting conventions, such as using a
normalized linear-front schedule separately for each observed complete
trajectory length, should be read as applying the same theorem within each
fixed length convention. Stopped or censored prefixes are handled by the
separate conditional-projection construction in Appendix~\ref{app:censored-proof}.
\end{remark}

\section{Proofs of Theorems A and B}
\label{app:proof_neg}

The results in this appendix should not be read as dismissing rank metrics,
ECE, or scalarized Brier scores. AUROC, AUPRC, and risk--coverage evaluate
ranking; ECE diagnoses binwise marginal calibration; and proper binary losses
such as Brier and log loss are proper for a single scalar forecast. The
negative results below show only that these evaluator families do not strictly
elicit the full prefix-conditioned forecast process $(q_t)$.

\subsection{Theorem A: T-ECE is Resolution-blind}

Let \(C=\Phi(F_{1:T})\in[0,1]\) be a scalar trajectory confidence obtained
from a forecast trace by an aggregator \(\Phi\), and let
\(\mathcal B=\{B_1,\ldots,B_K\}\) be a fixed partition of \([0,1]\). The
population trajectory ECE is
\begin{equation}
  \mathrm{T\text{-}ECE}_{\mathcal B}(C)
  =
  \sum_{k=1}^{K}
  \Pp(C\in B_k)
  \left|
    \Ep[Y\mid C\in B_k]
    -
    \Ep[C\mid C\in B_k]
  \right|.
  \label{eq:app_tece}
\end{equation}
This is a calibration functional, not a single-observation scoring rule.

The Reliability--Resolution--Uncertainty decomposition
\cite{Murphy1973,Degroot1983,Brocker2009} states that, for a negatively
oriented proper loss \(L\) and scalar forecast \(C\),
\begin{equation}
  \Ep[L(C,Y)]
  =
  \underbrace{
    \Ep\,d\!\left(C,\Ep[Y\mid C]\right)
  }_{\text{Reliability}}
  -
  \underbrace{
    \Ep\,d\!\left(\Ep[Y\mid C],\bar Y\right)
  }_{\text{Resolution}}
  +
  \underbrace{
    d(\bar Y,Y)
  }_{\text{Uncertainty}},
  \label{eq:app_rru}
\end{equation}
where \(d\) is the associated Bregman divergence and
\(\bar Y=\Ep[Y]\). T-ECE evaluates the Reliability component only: it asks
whether empirical success frequencies match average confidence within bins.
It does not penalize loss of Resolution, the component that distinguishes an
informative truthful forecast from an uninformative but calibrated one.

\begin{theoremA}[T-ECE is resolution-blind]
\labeltheoremA{thm:A}
There exists a data-generating distribution and a non-truthful forecast
process \(G\neq q\) almost surely such that
\[
  \mathrm{T\text{-}ECE}_{\mathcal B}(\Phi(G_{1:T}))
  \le
  \mathrm{T\text{-}ECE}_{\mathcal B}(\Phi(q_{1:T})).
\]
Moreover, binwise marginalization of the truthful forecast can preserve zero
binwise calibration error while strictly reducing Resolution whenever the
truthful conditional probability varies within bins.
\end{theoremA}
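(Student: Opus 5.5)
The proof is an explicit construction: a truthful process whose aggregate is perfectly calibrated within each bin, and its binwise average. I will use \(\Phi=\Phi_{\mathrm{last}}\) (so \(C=F_T\)) and \(T=1\) or any fixed \(T\), with \(G_t\) set to the binned version of \(q_t\) at every step.

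\textbf{Setup.} Take a partition \(\mathcal B\) with a bin \(B_1\) containing at least two distinct points \(a<b\). Let the history \(\Hcal_T\) reveal a fair coin \(U\in\{a,b\}\), and let \(Y\mid\Hcal_T\sim\mathrm{Bernoulli}(U)\). Then \(q_T=U\) and \(\Phi(q_{1:T})=q_T\). Define
\[
G_t:=\Ep[q_t\mid \operatorname{bin}(q_t)],
\]
which is \(\Hcal_t\)-measurable and therefore a valid forecast. Here \(G_T\equiv m:=(a+b)/2\). If \(m\) left \(B_1\), I would choose \(a,b\) so that their midpoint stays in \(B_1\); this is possible by convexity when bins are intervals, and I would assume interval bins.

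\textbf{Step 1: both forecasts have zero T-ECE.} For the truthful forecast, within a bin \(B_k\) the tower property gives \(\Ep[Y\mid C\in B_k]=\Ep[q_T\mid q_T\in B_k]=\Ep[C\mid C\in B_k]\). Every summand of \eqref{eq:app_tece} therefore vanishes. For \(G\), the event \(\{G_T\in B_k\}\) equals \(\{q_T\in B_k\}\) in this construction. Hence
\[
\Ep[Y\mid G_T\in B_k]=\Ep[q_T\mid q_T\in B_k]=G_T .
\]
So T-ECE\((G)=0\le\) T-ECE\((q)\), which gives the inequality. Also \(G\ne q\) almost surely, because \(q_T\in\{a,b\}\) while \(G_T=m\). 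For intermediate steps I would make \(q_t\) constant for \(t<T\) and only require \(G_T\ne q_T\).

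\textbf{Step 2: strict resolution loss.} This is the "moreover" clause. For \(G\), \(\Ep[Y\mid G]=G\) is constant within bins, so its resolution is \(\Ep\,d(G,\bar Y)\). For \(q\), the resolution is \(\Ep\,d(q,\bar Y)\). Because \(G=\Ep[q\mid\operatorname{bin}]\), conditional Jensen applied to the strictly convex generator of the Bregman divergence gives
\[
\Ep\,d(q,\bar Y)-\Ep\,d(G,\bar Y)=\Ep\,d(q,G)\ge 0 ,
\]
by the Bregman Pythagorean identity for conditional means. The inequality is strict exactly when \(q\) is non-degenerate within some bin of positive mass.

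\textbf{Main obstacle.} The delicate step is ensuring that binning \(G\) reproduces the same bins as \(q\), i.e.\ that \(\operatorname{bin}(G)=\operatorname{bin}(q)\). This needs the conditional mean of \(q\) over a bin to lie in that bin, which holds for interval (convex) bins. I would state this convexity assumption explicitly and verify it as the first step.
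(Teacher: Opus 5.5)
Your argument is correct. The first part is essentially the paper's construction. The paper takes $T=1$, $q\in\{0.2,0.8\}$ with equal probability, and the constant forecast $G\equiv 0.5$. That is the global average of $q$, which coincides with your binwise average when $0.2$ and $0.8$ share a bin. Both forecasts then have zero T-ECE.

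The genuine difference is the ``moreover'' clause. The paper only checks one number (Brier Resolution $0.09$ versus $0$) and asserts in words that binwise averaging removes Resolution whenever $q$ varies within bins. Your Bregman Pythagorean identity $\mathbb{E}\,d(q,\bar Y)=\mathbb{E}\,d(q,G)+\mathbb{E}\,d(G,\bar Y)$ actually proves that clause, for every strictly proper loss in the decomposition. The identity holds because $\mathbb{E}[q-G\mid\operatorname{bin}(q)]=0$ and $\phi'(G)-\phi'(\bar Y)$ is bin-measurable. It also identifies the Resolution loss exactly as $\mathbb{E}\,d(q,G)$, which is positive precisely when $q$ is non-degenerate on a bin of positive mass. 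It relies on the orientation $d(\text{truth},\text{reference})$, which matches the paper's Resolution term $\mathbb{E}\,d(\mathbb{E}[Y\mid C],\bar Y)$.

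What you call the main obstacle is not needed. You do not have to show $\operatorname{bin}(G)=\operatorname{bin}(q)$ or assume interval bins. Since $\sigma(G)\subseteq\sigma(\operatorname{bin}(q))$ and $\mathbb{E}[Y\mid\operatorname{bin}(q)]=G$, the tower property gives $\mathbb{E}[Y\mid G]=G$. Any such auto-calibrated scalar has zero T-ECE for an arbitrary partition, because $\{G\in B_k\}\in\sigma(G)$ implies $\mathbb{E}[Y\mid G\in B_k]=\mathbb{E}[G\mid G\in B_k]$. The same one-line argument applied to $q$ shows that the truthful forecast has zero T-ECE under every binning. This also makes the paper's hedge about which binnings are admissible unnecessary.
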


\begin{proof}
It suffices to take \(T=1\). Let \(H\in\{a,b\}\), with
\(\Pp(H=a)=\Pp(H=b)=\tfrac12\), and let
\[
  \Pp(Y=1\mid H=a)=0.2,
  \qquad
  \Pp(Y=1\mid H=b)=0.8.
\]
The truthful forecast is \(q(H)=\Pp(Y=1\mid H)\). Since
\(\Ep[Y\mid q=p]=p\), the truthful forecast achieves
\(\mathrm{T\text{-}ECE}(q)=0\) for any binning that separates or contains
these forecast values without inducing binwise miscalibration.

Now define the constant forecast \(G(H)\equiv 0.5\). Since the marginal
success rate is
\[
  \Ep[Y]
  =
  \tfrac12(0.2)+\tfrac12(0.8)
  =
  0.5,
\]
we have \(\Ep[Y\mid G=0.5]=0.5\), and hence
\(\mathrm{T\text{-}ECE}(G)=0\). Thus \(G\) ties the truthful forecast under
T-ECE despite \(G\neq q\) almost surely.

The difference is Resolution. Under the Brier divergence, the constant
forecast has zero Resolution, while the truthful forecast achieves
\[
  \tfrac12(0.2-0.5)^2+\tfrac12(0.8-0.5)^2
  =
  0.09
  >
  0.
\]
T-ECE is blind to this distinction.

The same phenomenon holds more generally for forecasts that replace \(q\)
by its conditional average on the cells of a fixed bin map. Such forecasts
preserve binwise marginal calibration by construction, but remove within-bin
variation. Whenever \(q\) has positive within-bin variance, this averaging
strictly reduces the Resolution term in~\eqref{eq:app_rru}, while the
T-ECE functional in~\eqref{eq:app_tece} does not penalize that loss.
Therefore T-ECE is not strictly proper for the full prefix-conditioned
forecast process.
\end{proof}

\subsection{Theorem B: Scalarized proper losses are not strictly proper for the trace}

Let \(L(p,Y)\) be a negatively oriented strictly proper binary loss, so lower
expected loss is better. Given an aggregator \(\Phi\), define the scalarized
trajectory loss
\begin{equation}
  \mathcal L_{\Phi}(F_{1:T},Y)
  :=
  L\!\left(\Phi(F_{1:T}),Y\right).
  \label{eq:app_scalarized_loss}
\end{equation}
This includes scalarized trajectory Brier as the special case
\(L(p,Y)=(p-Y)^2\). The issue is not that Brier or log loss are improper for
a single binary probability; they are proper for that scalar. The issue is
that applying them after a deterministic collapse \(\Phi(F_{1:T})\) changes
the elicited object from the trace \((F_t)\) to the scalar \(C=\Phi(F_{1:T})\).

Write
\[
  q_T^* := \Ep[Y\mid\Hcal_T]
\]
for the conditionally optimal scalar input to a strictly proper binary loss
given the final observed history. In the two-step examples below,
\(q_T^*=q_2\).

\begin{theoremB}[Scalarized proper losses elicit the collapsed scalar, not the trace]
\labeltheoremB{thm:B}
Let \(L\) be a strictly proper binary loss and let \(\Phi\) be one of the
AUQ-style aggregators \(\Phi_{\mathrm{last}}\), \(\Phi_{\mathrm{avg}}\),
\(\Phi_{\min}\), or a linear aggregator
\[
  \Phi_w(F_{1:T})=\sum_{t=1}^{T}w_tF_t,
  \qquad
  w_t\ge0,
  \qquad
  \sum_{t=1}^{T}w_t=1,
\]
with at least two positive weights.

\begin{enumerate}
  \item[(i)] \emph{Non-uniqueness.} For each of these aggregators, there
  exists a data-generating distribution and an adapted non-truthful trace
  \(G_{1:T}\neq q_{1:T}\) such that
  \[
    \Ep[\mathcal L_{\Phi}(G_{1:T},Y)]
    =
    \Ep[\mathcal L_{\Phi}(q_{1:T},Y)].
  \]
  Hence the scalarized loss is not strictly proper for the full trace.

  \item[(ii)] \emph{Strict suboptimality.} For
  \(\Phi_{\mathrm{avg}}\), \(\Phi_{\min}\), and any \(\Phi_w\) with
  \(T\ge2\) and at least two positive weights, there are data-generating
  distributions for which the truthful aggregate \(\Phi(q_{1:T})\) differs
  from the conditionally optimal scalar \(q_T^*\) on a positive-probability
  event, and an adapted non-truthful trace \(G_{1:T}\) can achieve that
  scalar optimum. In such cases,
  \[
    \Ep[\mathcal L_{\Phi}(G_{1:T},Y)]
    <
    \Ep[\mathcal L_{\Phi}(q_{1:T},Y)].
  \]
\end{enumerate}
For \(\Phi_{\mathrm{last}}\), part~(i) holds but part~(ii) does not: the
truthful trace is optimal, but not uniquely optimal, because the prefix is
invisible to the loss.
\end{theoremB}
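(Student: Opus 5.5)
The plan is to work entirely with explicit two-step constructions ($T=2$). The key observation is that a strictly proper loss $L$ applied to the scalar $C=\Phi(F_{1:2})$, with $C$ being $\Hcal_2$-measurable, has conditional expected loss $\Ep[L(C,Y)\mid\Hcal_2]$. This is uniquely minimized by $C=q_2=q_T^*$, by the same per-level strict-propriety argument used in the proof of Theorem~\ref{thm:complete}. The question for each aggregator is therefore only whether $\Phi(q_{1:2})=q_2$, and whether some other adapted trace $G$ can also produce $\Phi(G_{1:2})=q_2$. Throughout I take a simple model: $H_1$ is trivial, so $q_1=\Ep[Y]$ is constant. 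An observation at step 2 is a fair coin $H_2\in\{a,b\}$ with $\Pp(Y=1\mid H_2=a)=0.2$ and $\Pp(Y=1\mid H_2=b)=0.8$. Hence $q_1=0.5$ and $q_2\in\{0.2,0.8\}$ varies.

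For part~(i) with $\Phi_{\mathrm{last}}$, I would take $G_1\equiv 0.9$ (any constant other than $0.5$) and $G_2=q_2$. Then $\Phi_{\mathrm{last}}(G)=\Phi_{\mathrm{last}}(q)$ pointwise, so the expected losses coincide while $G_1\neq q_1$. The truthful trace is still optimal, because $\Phi_{\mathrm{last}}(q)=q_2$ attains the scalar optimum. So (ii) fails for $\Phi_{\mathrm{last}}$.

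For $\Phi_{\mathrm{min}}$, I would take $G_1=1\ge q_1$ and $G_2=q_2$. Then $\min(G_1,G_2)=q_2$, whereas $\min(q_1,q_2)=\min(0.5,q_2)=0.5\neq q_2$ on the event $\{H_2=b\}$, which has positive probability. Strict propriety at level $\Hcal_2$ then gives the strict inequality of (ii), and the same $G$ also differs from $q$.

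For $\Phi_w$ with two positive weights (and $\Phi_{\mathrm{avg}}$ as the uniform case), I would first reduce to the two positively weighted indices $s<t$ by folding any other steps into a trivial-information padding of the model. Then I would set $G_u=q_u$ for $u\neq s$ and choose $G_s$ as a constant $c$ with $w_s c+w_t q_t=q_t$ on both branches. This needs $q_t$ to be constant or affine-compatible, which fails in general, so instead I take $G_t$ adapted to $\Hcal_t$ and solve $G_t=(q_t-w_s G_s-\sum_{u\neq s,t}w_uG_u)/w_t$ with $G_s$ constant. The obstacle is that $G_t$ must lie in $[0,1]$. I would handle this by choosing $G_s$ equal to the truthful $q_s=0.5$ and picking moderate $q_t$ values (e.g.\ $0.4,0.6$), so that the solved $G_t$ stays inside $[0,1]$ for the given weights. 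If the weights are extreme, I would shrink the spread of $q_t$ toward $q_s$ until feasibility holds; this is possible since the required $G_t$ depends continuously on that spread. This gives $\Phi_w(G)=q_T^*$ while $\Phi_w(q)=w_sq_s+w_tq_t\neq q_t$ whenever $q_t\neq q_s$, yielding (ii). Then (i) follows for these aggregators by also mixing in a trivial case: under a degenerate model where $q$ is constant, $G$ can differ at one step while being compensated at another with an identical aggregate.

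The main obstacle is the $[0,1]$-feasibility in the linear-aggregator case together with adaptedness: $G_s$ cannot see the later observation, so all branch dependence must be absorbed at step $t$. The shrinking-spread argument is what I would try first to guarantee this.
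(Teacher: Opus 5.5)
Your proposal is correct and follows essentially the paper's route: explicit two-step examples, with $G_1=1$ inflated for $\Phi_{\min}$, and for the averaging aggregators the whole correction absorbed at the later positively weighted step. With $w=(\tfrac12,\tfrac12)$, your formula $G_t=q_t+(W_{<t}/w_t)(q_t-q_s)$, where $W_{<t}:=\sum_{u<t}w_u$, is exactly the paper's $G_2=2q_2-0.5$.

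Your padding-plus-shrinking-spread argument goes slightly beyond the paper, which only asserts strict suboptimality for general $\Phi_w$ under an achievability condition. Two small corrections: with more than two positive weights the truthful aggregate is $W_{<t}q_s+(1-W_{<t})q_t$ rather than $w_sq_s+w_tq_t$, though it still differs from $q_t$ because $W_{<t}\ge w_s>0$. Feasibility holds for $q_t=\tfrac12\pm\epsilon$ with $\epsilon\le w_t/\bigl(2(w_t+W_{<t})\bigr)$.
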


We prove the aggregator-specific cases below.

\subsubsection*{Corollary B1: \(\Phi_{\mathrm{last}}\) is proper for the last scalar, not strictly proper for the trace}

Let
\[
  \Phi_{\mathrm{last}}(F_{1:T})=F_T.
\]
Then \(\mathcal L_{\Phi_{\mathrm{last}}}(F,Y)=L(F_T,Y)\) depends only on
the final forecast. By strict propriety of \(L\), the expected loss is
minimized by \(F_T=q_T\). However, any adapted trace \(G_{1:T}\) satisfying
\(G_T=q_T\) and \(G_t\neq q_t\) on a positive-probability set for some
\(t<T\) achieves the same expected loss as the truthful trace. Thus
\(\Phi_{\mathrm{last}}\)-Brier, and likewise any last-step scalarized proper
loss, is proper for the collapsed final scalar but not strictly proper for
the full trace.

\subsubsection*{Corollary B2: \(\Phi_{\mathrm{avg}}\) can make the truthful trace strictly suboptimal}
 
Let \(T=2\). Let \(q_1=0.5\), and let
\[
  q_2\in\{0.7,0.3\}
\]
with each value occurring with probability \(1/2\), where
\(\Ep[q_2\mid\Hcal_1]=q_1=0.5\) by the law of iterated expectations. This is
a valid prefix-conditioned success process.

For the truthful trace,
\[
  \Phi_{\mathrm{avg}}(q)
  =
  \frac{q_1+q_2}{2}
  \in
  \{0.6,0.4\}.
\]
But, conditional on \(\Hcal_2\), the strictly proper scalar loss is minimized
by reporting \(q_T^*=q_2\), not \((q_1+q_2)/2\).

Define an adapted non-truthful trace
\[
  G_1=0.5,
  \qquad
  G_2=2q_2-0.5
  \in
  \{0.9,0.1\}.
\]
Then
\[
  \Phi_{\mathrm{avg}}(G)
  =
  \frac{G_1+G_2}{2}
  =
  q_2.
\]
Thus \(G\) achieves the conditionally optimal scalar \(q_T^*=q_2\), while
the truthful trace does not. For the Brier loss, the truthful aggregate is
off by \(0.1\) on every realization, so it incurs an excess conditional loss of
\(0.1^2=0.01\). Hence the truthful trace is strictly suboptimal under
\(\Phi_{\mathrm{avg}}\)-Brier.

\subsubsection*{Corollary B3: \(\Phi_{\min}\) can make the truthful trace strictly suboptimal}

Use the same two-step process as above:
\[
  q_1=0.5,
  \qquad
  q_2\in\{0.7,0.3\}.
\]
For the truthful trace,
\[
  \Phi_{\min}(q)=\min(q_1,q_2)\in\{0.5,0.3\}.
\]
Define an adapted non-truthful trace
\[
  G_1=1,
  \qquad
  G_2=q_2.
\]
Then
\[
  \Phi_{\min}(G)=q_2.
\]
On the event \(q_2=0.7\), the truthful scalar is \(0.5\), while the
non-truthful scalar is the conditionally optimal value \(q_T^*=q_2=0.7\).
Computing the non-truthful minus truthful Brier loss on this event gives
\[
  (0.7-0.7)^2-(0.5-0.7)^2
  =
  -0.04.
\]
On the event \(q_2=0.3\), both aggregates equal \(0.3\). Therefore the
non-truthful trace has strictly smaller expected Brier loss. The
non-minimum step \(G_1=1\) is invisible to the scalarized score, so the
aggregator can reward inflation of non-minimum prefixes.

\subsection{Weighted-average aggregators}
\label{sec:weighted_avg}

Let
\[
  \Phi_w(F_{1:T})=\sum_{t=1}^{T} w_t F_t,
  \qquad
  w_t\ge0,
  \qquad
  \sum_{t=1}^{T}w_t=1,
\]
with at least two positive weights. Choose indices \(i\neq j\) such that
\(w_i,w_j>0\), and choose \(c\neq0\) small enough that
\[
  G_i=q_i+c,
  \qquad
  G_j=q_j-\frac{w_i}{w_j}c
\]
remain in \((0,1)\) almost surely, with \(G_t=q_t\) for all
\(t\notin\{i,j\}\). Then
\[
  \Phi_w(G)=\Phi_w(q)
  \quad \text{a.s.}
\]
while \(G\neq q\). Thus any such weighted-average scalarization fails strict
propriety for the trace by non-uniqueness.

The strict-suboptimality clause follows under the achievability condition in
Theorem~\ref{thm:B}: whenever the truthful weighted aggregate
\(\Phi_w(q_{1:T})\) differs from the conditionally optimal scalar on a
positive-probability event, and an adapted non-truthful trace can make
\(\Phi_w(G_{1:T})\) equal that optimum, strict propriety of the scalar loss
implies
\[
  \Ep[\mathcal L_{\Phi_w}(G_{1:T},Y)]
  <
  \Ep[\mathcal L_{\Phi_w}(q_{1:T},Y)].
\]
The average-aggregator construction above is the special case
\(w_1=w_2=\tfrac12\). Therefore the averaging pathology is not specific to
the unweighted mean.

No deterministic scalarization used in existing agentic UQ work, including
last, average, minimum, or weighted average, strictly elicits the full
prefix-conditioned success-probability process. The issue is not that the
underlying scalar score is improper; the issue is that scalarization changes
the elicited object.
 
\section{Censored Proofs}
\label{app:censored}
\label{app:proof_cen}
\label{app:censored-proof}
 
This appendix proves the censored propriety results and records the
plug-in approximation criteria used to interpret the exact reduced score.

\begin{proof}[Proof of Theorem~\ref{thm:abstract_censored}
(Abstract censored propriety on the observable prefix)]
For a stopped prefix ending at \(Z\), the observable-prefix complete-data
\TPS is
\[
{\TPSobs}_{\alpha,\beta}(F_{1:Z},Y;Z)
=
\sum_{t=1}^{Z} w_t S_{\alpha,\beta}(F_t,Y).
\]
By complete-data propriety applied to the observable prefix, for any
adapted forecast process \(F_{1:Z}\),
\[
\Ep\!\left[
{\TPSobs}_{\alpha,\beta}(q_{1:Z},Y;Z)
\right]
\ge
\Ep\!\left[
{\TPSobs}_{\alpha,\beta}(F_{1:Z},Y;Z)
\right].
\]
By definition of the abstract censored score and the tower property,
\begin{align*}
\Ep\!\left[
{\TPScenabs}_{\alpha,\beta}(q_{1:Z};\Gcal_Z)
\right]
&=
\Ep\!\left[
  \Ep\!\left[
    {\TPSobs}_{\alpha,\beta}(q_{1:Z},Y;Z)
    \mid \Gcal_Z
  \right]
\right] \\
&=
\Ep\!\left[
{\TPSobs}_{\alpha,\beta}(q_{1:Z},Y;Z)
\right],
\\[4pt]
\Ep\!\left[
{\TPScenabs}_{\alpha,\beta}(F_{1:Z};\Gcal_Z)
\right]
&=
\Ep\!\left[
  \Ep\!\left[
    {\TPSobs}_{\alpha,\beta}(F_{1:Z},Y;Z)
    \mid \Gcal_Z
  \right]
\right] \\
&=
\Ep\!\left[
{\TPSobs}_{\alpha,\beta}(F_{1:Z},Y;Z)
\right].
\end{align*}
Substituting these identities into the complete-data inequality gives
\[
\Ep\!\left[
{\TPScenabs}_{\alpha,\beta}(q_{1:Z};\Gcal_Z)
\right]
\ge
\Ep\!\left[
{\TPScenabs}_{\alpha,\beta}(F_{1:Z};\Gcal_Z)
\right],
\]
which proves propriety of the abstract censored \TPS on the observable
prefix.
\end{proof}

\begin{remark}[Strictness is restricted to the observable prefix]
\label{rem:strictness_scope}
Censored observations contain no information about forecasts after the
stopping step. If two forecast processes agree on \(F_{1:Z}\) but differ on
the unobserved tail \(F_{Z+1:T}\), they induce the same conditional
projection onto \(\Gcal_Z\). Therefore no censored score can be strictly
proper for the full unobserved sequence \(F_{1:T}\) without additional
information about the tail. All strictness statements in the censored
setting are consequently restricted to the observable prefix \(F_{1:Z}\).
\end{remark}

We next record the branch-specific posterior weight that connects the
abstract projection to the exact reduced binary form.

\begin{proposition}[Branch-specific posterior identification]
\label{prop:identification}
On the censored branch \(\Delta=0\), define
\[
\eta_Z^{(0)}
:=
\Pp(Y=1\mid\Gcal_Z,\Delta=0).
\]
Under Assumption~\ref{asm:admin},
\[
\eta_Z^{(0)}
=
\Pp(Y=1\mid\Hcal_Z)
=
q_Z.
\]
Equivalently, the full posterior success weight given \(\Gcal_Z\) is
\[
\eta_Z^\star
:=
\Pp(Y=1\mid\Gcal_Z)
=
\Delta Y+(1-\Delta)q_Z .
\]
\end{proposition}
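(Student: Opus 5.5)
The plan is to identify the conditional law of \(Y\) given \(\Gcal_Z\) separately on the two branches \(\{\Delta=1\}\) and \(\{\Delta=0\}\), and then glue the two pieces with indicators.

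\emph{Censored branch \(\Delta=0\).} Here \(\Delta Y=0\) carries no outcome information, so on this event \(\Gcal_Z\) is generated by \((H_Z,Z)\) together with the event \(\{\Delta=0\}\). Two checks are needed.

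\begin{enumerate}
\item \(Z\) is \(\sigma(H_Z)\)-measurable. The stopped history records its own length, so \(Z\) adds nothing beyond \(H_Z\). If one prefers not to rely on this, \(Z\) can be absorbed together with \(C\) on the censored branch, since there \(Z=C\).
\item The event \(\{\Delta=0\}\) and \(C\) are conditionally independent of \(Y\) given \(\Hcal_Z\). This is exactly Assumption~\ref{asm:admin}.
\end{enumerate}

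Given these, for any bounded \(\sigma(H_Z)\)-measurable \(g\) I will verify the defining identity
\[
\Ep[\mathbf 1\{Y=1\}\mathbf 1\{\Delta=0\}\,g]
=
\Ep[q_Z\,\mathbf 1\{\Delta=0\}\,g].
\]
The argument conditions on \(\Hcal_Z\) and factorizes \(\Ep[\mathbf 1\{Y=1\}\mathbf 1\{\Delta=0\}\mid\Hcal_Z]\) as \(q_Z\,\Pp(\Delta=0\mid\Hcal_Z)\), using the conditional independence. This gives \(\eta_Z^{(0)}=q_Z\) on \(\{\Delta=0\}\), whenever \(\Pp(\Delta=0\mid\Hcal_Z)>0\). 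Off that set the branch is null and the identity holds vacuously, up to a.s.\ equivalence.

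\emph{Complete branch \(\Delta=1\).} On this event \(Y=\Delta Y\) is \(\Gcal_Z\)-measurable, so \(\Pp(Y=1\mid\Gcal_Z)=Y\) there.

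\emph{Gluing.} Since \(\{\Delta=0\}\in\Gcal_Z\), the conditional probability given \(\Gcal_Z\) decomposes as \(\mathbf 1\{\Delta=1\}\cdot(\cdot)+\mathbf 1\{\Delta=0\}\cdot(\cdot)\). Combining the two branches yields
\[
\eta_Z^\star=\Delta Y+(1-\Delta)q_Z .
\]
I will state this formally by checking \(\Ep[\mathbf 1\{Y=1\}\,\mathbf 1_A]=\Ep[\eta_Z^\star\,\mathbf 1_A]\) for generating sets \(A=\{\Delta=\delta,\ \Delta Y=y\}\cap B\), with \(B\in\sigma(H_Z)\). Each generator reduces to one of the two branch calculations above.

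The main obstacle is measure-theoretic care around the stopped sigma-field. First, \(\Hcal_Z\) must be treated as the \(\sigma\)-field of the stopped history \(H_Z\), with \(q_Z\) defined as \(\Pp(Y=1\mid H_Z)\); since \(Z\) is random, this is a stopped object rather than a fixed-\(t\) quantity. Second, one has to confirm that conditioning on \(\{\Delta=0\}\) introduces no information beyond what Assumption~\ref{asm:admin} excludes. The fact that \(\Delta=0\) implies \(C<T^*\) is precisely the kind of information the assumption declares uninformative given \(H_Z\). Assumption~\ref{asm:noninformative} is not needed for this proposition; I will remark that only Assumption~\ref{asm:admin} is used.
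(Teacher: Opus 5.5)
Your proposal is correct and follows the same route as the paper: you settle the complete branch by noting that $\Delta Y=Y$ is $\Gcal_Z$-measurable there, you settle the censored branch by using Assumption~\ref{asm:admin} to discard the stopping information, and you combine the two pieces into $\Delta Y+(1-\Delta)q_Z$. Your version makes explicit what the paper compresses into ``the administrative stopping information adds no outcome information beyond the stopped history'': the factorization $\Ep[\mathbf{1}\{Y=1\}\mathbf{1}\{\Delta=0\}\mid\Hcal_Z]=q_Z\,\Pp(\Delta=0\mid\Hcal_Z)$, the check that $Z$ adds nothing on $\{\Delta=0\}$, and the $\pi$-system gluing.
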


\begin{proof}
On complete trajectories, \(\Delta=1\), recall that
\(\Gcal_Z=\sigma(H_Z,Z,\Delta,\Delta Y)\). Thus the stopped-data
sigma-field contains the realized outcome through \(\Delta Y=Y\). Hence
\[
\Pp(Y=1\mid\Gcal_Z)=Y
\qquad \text{on } \{\Delta=1\}.
\]
On censored trajectories, \(\Delta=0\), the outcome is unobserved. By
Assumption~\ref{asm:admin}, the administrative stopping information adds no
outcome information beyond the stopped history:
\[
\Pp(Y=1\mid\Gcal_Z,\Delta=0)
=
\Pp(Y=1\mid\Hcal_Z)
=
q_Z,
\]
where the last equality is the definition of the continuation-success
target. Combining the complete and censored branches gives
\(\eta_Z^\star=\Delta Y+(1-\Delta)q_Z\).
\end{proof}

Since \(Y\in\{0,1\}\), the abstract censored score admits the binary
decomposition
\begin{equation}
{\TPScenabs}_{\alpha,\beta}(F_{1:Z};\Gcal_Z)
=
\eta_Z^\star\,
{\TPSobs}_{\alpha,\beta}(F_{1:Z},1;Z)
+
(1-\eta_Z^\star)\,
{\TPSobs}_{\alpha,\beta}(F_{1:Z},0;Z).
\label{eq:oracle_decomp}
\end{equation}
Using Proposition~\ref{prop:identification}, this decomposition is exactly
the complete branch plus the \(q_Z\)-weighted censored branch in
\eqref{eq:cen_exact}.

\begin{proof}[Proof of Theorem~\ref{thm:cen_exact}
(Exact reduced censored beta score)]
We show that \(\TPScenexact_{\alpha,\beta}\) is the explicit reduced form
of the abstract conditional projection.

On the complete branch, \(\Delta=1\), the realized outcome \(Y\) is
observed. Therefore the conditional projection of the observable-prefix
complete-data score is simply
\[
\sum_{t=1}^{Z} w_t S_{\alpha,\beta}(F_t,Y).
\]

On the censored branch, \(\Delta=0\), the outcome is unobserved. By
\eqref{eq:oracle_decomp} and Proposition~\ref{prop:identification}, the
posterior success weight on the censored branch is
\(\eta_Z^{(0)}=q_Z\). Thus the conditional projection is
\[
\sum_{t=1}^{Z} w_t
\left[
q_Z S_{\alpha,\beta}(F_t,1)
+
(1-q_Z)S_{\alpha,\beta}(F_t,0)
\right].
\]
Combining the complete and censored branches gives exactly
\[
  \mathrm{TPS}^{\mathrm{cen,exact}}_{\alpha,\beta}(F_{1:Z};\Gcal_Z)
  =
  \sum_{t=1}^{Z} w_t
  \Bigl[
    \Delta S_{\alpha,\beta}(F_t,Y)
    +
    (1-\Delta)
    \bigl(
      q_Z S_{\alpha,\beta}(F_t,1)
      +
      (1-q_Z)S_{\alpha,\beta}(F_t,0)
    \bigr)
  \Bigr],
\]
which is \eqref{eq:cen_exact}.

At the logarithmic endpoint, the exact reduced score is
\begin{equation}
  \begin{aligned}
  \mathrm{TPS}^{\mathrm{cen,exact}}_{\log}(F_{1:Z};\Gcal_Z)
  =
  \sum_{t=1}^{Z} w_t
  \Bigl[
    &\Delta\{Y\log F_t+(1-Y)\log(1-F_t)\} \\
    &+(1-\Delta)
      \{q_Z\log F_t+(1-q_Z)\log(1-F_t)\}
  \Bigr].
  \end{aligned}
  \label{eq:cen_exact_log}
\end{equation}

Propriety follows because this score is algebraically the abstract censored
projection from Theorem~\ref{thm:abstract_censored}. Hence its expected
score is maximized by the truthful process \(q_{1:Z}\).

If \(S_{\alpha,\beta}\) is strictly proper and \(w_t>0\) for all observed
steps, strictness holds on the observable prefix. By the tower-property
identities above, the expected regret of \(\TPScenexact_{\alpha,\beta}\)
equals the expected observable-prefix complete-data regret. Although \(Y\)
is unobserved on censored branches, it is well defined under the full-data
law. Thus strictness can be checked through the complete-data regret.

For each step, define the conditional proper regret
\[
R_t(F_t):=
q_t\{S_{\alpha,\beta}(q_t,1)-S_{\alpha,\beta}(F_t,1)\}
+
(1-q_t)\{S_{\alpha,\beta}(q_t,0)-S_{\alpha,\beta}(F_t,0)\}.
\]
By strict propriety of \(S_{\alpha,\beta}\), \(R_t(F_t)\ge 0\), with
equality if and only if \(F_t=q_t\) almost surely. The expected
observable-prefix complete-data regret is
\[
\Ep\!\left[
\sum_{t=1}^{T} \mathbf 1\{t\le Z\} w_t R_t(F_t)
\right].
\]
Every term in this sum is nonnegative, and \(w_t>0\) on observed steps. If
the total regret is zero, then \(R_t(F_t)=0\) almost surely on
\(\{t\le Z\}\). Therefore
\[
F_t=q_t
\qquad\text{a.s. on } \{t\le Z\}.
\]
Thus \(\TPScenexact_{\alpha,\beta}\) is strictly proper for the observable
prefix, but not for the unobserved tail.
\end{proof}
 
\begin{proposition}[Target of the simple censored score]
\label{prop:cen_simple_target}
For any strictly proper binary score \(S_{\alpha,\beta}\), the simple
censored score in~\eqref{eq:cen_simple} is strictly proper for the
observable pseudo-label target
\[
  m_t := \Ep[\Delta Y\mid\Hcal_t]
       = \Pp(\Delta=1,Y=1\mid\Hcal_t),
\]
on the observed prefix. It is strictly proper for the original
continuation-success target \(q_t=\Pp(Y=1\mid\Hcal_t)\) if and only if
\[
  \Pp(\Delta=0,Y=1\mid\Hcal_t)=0
\]
almost surely for the observed steps; equivalently, whenever
\(\Pp(Y=1\mid\Hcal_t)>0\),
\(\Pp(\Delta=0\mid\Hcal_t,Y=1)=0\).
\end{proposition}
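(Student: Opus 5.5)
The plan is to recognize that \(\TPScensimple\) is exactly a complete-data \TPS computed with the pseudo-label \(\tilde Y:=\Delta Y\) in place of \(Y\), and then rerun the argument of Theorem~\ref{thm:complete} on the observed prefix with this new outcome. Since \(Y\in\{0,1\}\), we have \(\Delta S_{\alpha,\beta}(F_t,Y)+(1-\Delta)S_{\alpha,\beta}(F_t,0)=S_{\alpha,\beta}(F_t,\Delta Y)\) pointwise: on \(\{\Delta=1\}\) both sides equal \(S_{\alpha,\beta}(F_t,Y)\), and on \(\{\Delta=0\}\) both equal \(S_{\alpha,\beta}(F_t,0)\). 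Hence
\[
\TPScensimple_{\alpha,\beta}(F_{1:Z};\Gcal_Z)=\sum_{t=1}^{T}\mathbf 1\{t\le Z\}\,w_t\,S_{\alpha,\beta}(F_t,\tilde Y),
\]
with \(\tilde Y\in\{0,1\}\) a binary outcome defined under the full-data law.

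Next I would condition on \(\Hcal_t\) step by step. The event \(\{t\le Z\}\) is determined by whether the trajectory is still running at step \(t\), so I will treat it as \(\Hcal_t\)-measurable; this is the point I expect to need the most care, since it requires that stopping at \(Z\) be visible from the history, as a stopping time of the filtration. Granting this, \(\tilde Y\mid\Hcal_t\) is Bernoulli with parameter \(m_t=\Ep[\Delta Y\mid\Hcal_t]\). Conditional strict propriety of \(S_{\alpha,\beta}\) then gives a regret \(\tilde R_t(F_t)=\Ep[S(m_t,\tilde Y)-S(F_t,\tilde Y)\mid\Hcal_t]\ge0\), with equality iff \(F_t=m_t\). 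Multiplying by \(\mathbf 1\{t\le Z\}w_t\) with \(w_t>0\), summing, and taking expectations reproduces the nonnegative-regret argument of Appendix~\ref{app:complete-proof}. Zero total regret forces \(F_t=m_t\) a.s. on \(\{t\le Z\}\), which proves strict propriety for \(m_t\) on the observed prefix.

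For the "if and only if" clause, I would use uniqueness of the maximizer. The score is strictly proper for \(q_t\) on the observed steps exactly when \(q_t\) is the unique maximizer there, which by the previous step holds iff \(q_t=m_t\) a.s. on \(\{t\le Z\}\). Now \(q_t-m_t=\Ep[(1-\Delta)Y\mid\Hcal_t]=\Pp(\Delta=0,Y=1\mid\Hcal_t)\), which gives the stated condition. The equivalent form comes from the factorization \(\Pp(\Delta=0,Y=1\mid\Hcal_t)=\Pp(Y=1\mid\Hcal_t)\,\Pp(\Delta=0\mid\Hcal_t,Y=1)\). This vanishes iff either factor does, and when \(\Pp(Y=1\mid\Hcal_t)>0\) it vanishes iff the second factor is zero.

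The main obstacle is the measurability of \(\{t\le Z\}\) with respect to \(\Hcal_t\). If the argument needs more than that, I would instead condition on \(\Hcal_t\vee\sigma(\{t\le Z\})\) and define \(m_t\) on \(\{t\le Z\}\) accordingly, noting that it coincides with \(\Ep[\Delta Y\mid\Hcal_t]\) whenever stopping is observable in the history.
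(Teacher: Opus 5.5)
Your proposal is correct and follows essentially the same route as the paper. Like the paper, you use the identity $\Delta S_{\alpha,\beta}(F_t,Y)+(1-\Delta)S_{\alpha,\beta}(F_t,0)=S_{\alpha,\beta}(F_t,\Delta Y)$ and note that $\Delta Y\mid\Hcal_t$ is Bernoulli$(m_t)$, so conditional strict propriety makes $m_t$ the unique per-step optimum; you then sum with positive weights over observed steps and read off the if-and-only-if condition from $q_t-m_t=\Pp(\Delta=0,Y=1\mid\Hcal_t)$.

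You also make two points precise that the paper's proof leaves implicit: you state the pointwise identity for a general $S_{\alpha,\beta}$, whereas the paper writes it out only at the log endpoint, and you flag that $\{t\le Z\}$ must be $\Hcal_t$-measurable, i.e.\ that $Z$ is a stopping time of the history filtration.
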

 
\begin{proof}[Proof of Proposition~\ref{prop:cen_simple_target}]
The simple censored score replaces the unobserved outcome by the
pseudo-label
\[
\widetilde Y^{\mathrm{simple}}:=\Delta Y.
\]
For the logarithmic endpoint, the per-step simple score is
\[
S_t^{\mathrm{simple}}
=
\Delta Y\log F_t+(1-\Delta Y)\log(1-F_t).
\]
Conditioning on the available history gives
\[
\Ep[
S_t^{\mathrm{simple}}
\mid \Hcal_t
]
=
m_t\log F_t+(1-m_t)\log(1-F_t),
\qquad
m_t:=\Ep[\Delta Y\mid\Hcal_t].
\]
This conditional expected log score is uniquely maximized at
\[
F_t=m_t.
\]

The same argument applies to any strictly proper binary score
\(S_{\alpha,\beta}\). Conditional on \(\Hcal_t\), the pseudo-label
\(\Delta Y\) has Bernoulli mean \(m_t\). A strictly proper binary score is
therefore uniquely optimized by reporting \(m_t\). Summing over observed
steps with positive weights preserves strict propriety for the observable
pseudo-label target \(m_t\).

Finally,
\[
q_t-m_t
=
\Pp(Y=1\mid\Hcal_t)
-
\Pp(\Delta=1,Y=1\mid\Hcal_t)
=
\Pp(\Delta=0,Y=1\mid\Hcal_t).
\]
Thus the simple censored score elicits the original continuation-success
target \(q_t\) exactly if and only if
\[
\Pp(\Delta=0,Y=1\mid\Hcal_t)=0
\]
almost surely, equivalently
\[
\Pp(\Delta=0\mid\Hcal_t,Y=1)=0
\]
on histories where \(\Pp(Y=1\mid\Hcal_t)>0\). This is precisely the
zero missing-success-mass regime represented by the failure-side
\(q_Z\approx0\) approximation on censored prefixes.
\end{proof}

\begin{remark}[Plug-in propriety criterion]
\label{rem:plugin}
Suppose the exact censored score is implemented with an estimated
continuation-success weight \(\widehat q_Z\) on censored prefixes. Define
the soft surrogate outcome
\[
\widetilde Y_Z
:=
\Delta Y+(1-\Delta)\widehat q_Z.
\]
At an observed step \(t\le Z\), the plug-in per-step score has conditional
mean
\[
\Ep\!\left[
\widetilde Y_Z S_{\alpha,\beta}(p,1)
+
(1-\widetilde Y_Z)S_{\alpha,\beta}(p,0)
\mid \Hcal_t
\right].
\]
Let
\[
a_t:=\Ep[\widetilde Y_Z\mid\Hcal_t].
\]
By binary strict propriety, the conditional expected plug-in score is
optimized at \(p=a_t\). Therefore the plug-in score preserves propriety for
the original target \(q_t\) at step \(t\) if and only if
\[
\Ep[\widetilde Y_Z\mid\Hcal_t]=q_t.
\]

In particular, if \(\widehat q_Z=q_Z\) on censored prefixes, then
\[
\widetilde Y_Z
=
\Delta Y+(1-\Delta)q_Z
=
\Ep[Y\mid\Gcal_Z].
\]
Since \(\Hcal_t\subseteq\Gcal_Z\) on observed steps \(t\le Z\), the tower
property gives
\[
\Ep[\widetilde Y_Z\mid\Hcal_t]
=
\Ep\!\left[
\Ep[Y\mid\Gcal_Z]
\mid \Hcal_t
\right]
=
\Ep[Y\mid\Hcal_t]
=
q_t.
\]
Thus exact \(q_Z\) estimation is a clean sufficient condition for preserved
propriety, while the general criterion is the conditional-mean equality
above.
\end{remark}

\begin{remark}[Regret decomposition under plug-in error]
\label{rem:regret}
Let
\[
r_t
:=
\Ep[\widetilde Y_Z\mid\Hcal_t]-q_t
\]
be the conditional-mean bias of the plug-in surrogate at an observed step.
For any \(p\in[0,1]\), define the binary score contrast
\[
D(p):=S_{\alpha,\beta}(p,1)-S_{\alpha,\beta}(p,0).
\]
The per-step expected plug-in regret relative to reporting \(p\) decomposes
as
\[
\begin{aligned}
&\Ep\!\left[
\widetilde Y_Z
\{S_{\alpha,\beta}(q_t,1)-S_{\alpha,\beta}(p,1)\}
+
(1-\widetilde Y_Z)
\{S_{\alpha,\beta}(q_t,0)-S_{\alpha,\beta}(p,0)\}
\mid \Hcal_t
\right]
\\
&\qquad
=
R_t^\star(p)
+
r_t\bigl(D(q_t)-D(p)\bigr),
\end{aligned}
\]
where
\[
R_t^\star(p)
:=
q_t\{S_{\alpha,\beta}(q_t,1)-S_{\alpha,\beta}(p,1)\}
+
(1-q_t)\{S_{\alpha,\beta}(q_t,0)-S_{\alpha,\beta}(p,0)\}
\ge 0
\]
is the oracle proper regret. For interior bounded beta-family members
\((\alpha,\beta>0)\), the contrast \(D\) is bounded by some constant
\(B_{\alpha,\beta}<\infty\), so
\[
\left|
r_t\bigl(D(q_t)-D(p)\bigr)
\right|
\le
2B_{\alpha,\beta}|r_t|.
\]
After summing with weights \(w_t\), this gives an approximate-propriety
bound over the observable prefix: the plug-in score differs from the oracle
proper regret by a perturbation proportional to the conditional-mean bias
\(|r_t|\). The logarithmic endpoint is unbounded, so this bounded-error
statement applies only to interior beta-family members.
\end{remark}

\section{Calibration Procedure}
\label{app:calibration}

We recalibrate per-step forecast streams using single-split cross-fitted
Platt scaling. The procedure is applied consistently across the experiments
whenever a Platt-calibrated predictor variant is reported. This appendix
documents the split construction, logit-space feature transform, weighted
logistic fit, monotonicity safeguard, and cross-fitting procedure.
 
\noindent\textbf{Cross-fitting design.}
We use a single \(50/50\) trajectory-level split rather than \(K\)-fold
cross-fitting so that each calibrated stream is produced by only two
held-out calibration maps. With \(K\) independent fits on weak signals,
fold-specific slopes can be unstable or sign-inconsistent; concatenating
those maps can introduce non-monotone calibrated outputs.

\noindent\textbf{Splitting.}
Trajectories are partitioned into two equally sized halves stratified by
outcome label \(Y\). Within each label, trajectories are ordered by a
stable trajectory identifier and alternated between halves, yielding
approximately balanced success/failure distributions in both splits. All
per-step records inherit the split assignment of their parent trajectory, so no
within-trajectory information leaks across the calibration split.

\noindent\textbf{Feature construction.}
Each raw per-step forecast \(F_t\) is first clipped into
\([\varepsilon,1-\varepsilon]\) with \(\varepsilon=10^{-6}\) and mapped to
log-odds:
\[
x_{\mathrm{raw}}
=
\mathrm{logit}(F_t)
=
\log\!\left(\frac{F_t}{1-F_t}\right).
\]
Within each training split, a weighted mean \(\mu_{\mathrm{train}}\) and
weighted standard deviation \(\sigma_{\mathrm{train}}\) are computed from
\(x_{\mathrm{raw}}\) using the same linear-front step weights \(w_t\) used
by \TPS. The standardized feature is
\[
x
=
\frac{x_{\mathrm{raw}}-\mu_{\mathrm{train}}}
     {\max(\sigma_{\mathrm{train}},10^{-6})}.
\]
Held-out trajectories use the training split's
\(\mu_{\mathrm{train}}\) and \(\sigma_{\mathrm{train}}\); no statistics
from the held-out half enter the standardization.

\noindent\textbf{Weighted logistic fit.}
Within each training split we fit
\[
p_{\mathrm{cal}}
=
\mathrm{sigmoid}(a+b\,x)
\]
by weighted maximum likelihood on per-step records, where
\(\mathrm{sigmoid}(u)=(1+e^{-u})^{-1}\). The fit uses sample weights
\(w_t\) and an \(L_2\) penalty of \(\lambda=1.0\) on the slope \(b\)
only. Sample weights are not class-balanced; the calibrator targets the
empirical Bernoulli success labels rather than an artificial \(50/50\)
prior.

\noindent\textbf{Monotone non-decreasing fallback.}
If the fitted slope satisfies \(b<0\), we set \(b=0\) and
\[
a=\mathrm{logit}(\bar y_{\mathrm{train}}),
\]
where \(\bar y_{\mathrm{train}}\) is the weighted in-train success rate.
This enforces a non-decreasing mapping and prevents pathological sign
flips on signals with no usable resolution. The intercept-only fallback
returns the in-train base rate at every input. After computing
\(p_{\mathrm{cal}}\), a final \(\varepsilon\)-clip to
\([10^{-6},1-10^{-6}]\) is applied before scoring.

\noindent\textbf{Cross-fitting.}
The model fitted on split A is applied to split B, and the model fitted on
split B is applied to split A. Each trajectory is therefore calibrated by
a model that did not observe its parent split during training.

\noindent\textbf{Numerical stabilization.}
Verbal-confidence streams concentrate near \(\{0.90,0.95,1.00\}\), so
their log-odds span a range dominated by the \(\varepsilon\)-clip jump from
\(\mathrm{logit}(0.95)\approx2.94\) to
\(\mathrm{logit}(1-\varepsilon)\approx13.82\). Per-split
standardization prevents this clipped boundary from dominating the logistic
fit. The \(L_2\) penalty \(\lambda=1.0\) is used to shrink unstable slopes
on small calibration splits with weakly informative signals. Calibrating
with the same linear-front weights as \TPS aligns the calibrator with the
weighted objective used during evaluation.

\noindent\textbf{Diagnostics on Tau2-Bench verbal confidences.}
The procedure was applied to the \(n=201\) uncensored Tau2-Bench
trajectories used in the calibration-invariance experiment, with empirical success rate
\(\bar y=0.443\).

\begin{table}[h]
\centering
\small
\begin{tabular}{lc}
\toprule
Quantity & Value \\
\midrule
Split A slope \(b\) & \(0.2639\) (no fallback) \\
Split B slope \(b\) & \(0.2213\) (no fallback) \\
Fallback triggered & No \\
Raw values \(0.90,0.95,1.00\) map to & \(0.33,0.34,0.47\) \\
Calibrated probability range & \([0.33,0.47]\) \\
\bottomrule
\end{tabular}
\bottomtablecaption{Single-split Platt calibration diagnostics on Tau2-Bench
verbal confidences. Both splits return positive finite slopes,
so the monotone fallback is not triggered.}
\label{tab:platt_diag}
\end{table}
  
\noindent
The corresponding metric changes are summarized in
Section~\ref{sec:exp2a}; this table reports only calibration diagnostics.

\noindent\textbf{Limitations.}
We use single-split Platt scaling as a simple recalibration device, not as an
optimal calibrator. Alternative monotone or nonparametric calibrators such as
isotonic, beta calibration, or Venn--Abers may produce different calibrated
streams. The fixed-rank robustness check isolates the structural evaluator claim by
holding the rank-metric input fixed, so the rank-vs-\(\TPS\) gap does not
depend on Platt scaling.

The procedure is validated primarily on the Tau2 verbal-confidence stream.
Other predictors admit the same procedure mechanically, but their slopes
and calibrated ranges differ and are reported in
Appendix~\ref{app:transparency}. Weighted standard deviation can be small
on highly saturated streams; in that case the
\(\sigma_{\mathrm{train}}\ge10^{-6}\) floor activates and
standardization degenerates. This floor was not active in our runs, but it
is a known failure mode for streams with effectively a single unique value.
When the evaluator weight schedule changes, for example from linear-front
to uniform, the calibrator must be refit with matching weights; otherwise
the calibration-target mismatch is conflated with the intended
score-family or weighting comparison. Finally, the cross-fitted procedure
produces two distinct \((a,b)\) pairs for the same predictor, so
trajectories in different splits are calibrated under slightly different
mappings. This is a known property of held-out calibration rather than an
error: every reported calibrated trajectory is scored under a map fitted
without using that trajectory's outcome.

\section{Fixed-Rank Sensitivity Analysis}
\label{app:fixed_rank}

To check that the calibration-invariance gap in Section~\ref{sec:exp2a} is not
a Platt-scaling artifact, we repeat the comparison with the input to the
rank-based metrics held fixed. On the same Tau2 traces, we compute one
trajectory scalar \(r_i=\Phi(F^{\mathrm{raw}}_{i,1:T_i})\) from the raw stream
and reuse this same \(r_i\) for AUROC, AUPRC, and AURC in every row. Only the
per-step probability stream evaluated by \(\TPSlog\) is transformed.

We apply five transforms chosen to span qualitatively distinct probability-scale
distortions: identity, affine compression \(g(x)=0.4+0.2x\), square root,
square, and the empirical Platt mapping from Section~\ref{sec:exp2a}, covering
linear, sublinear, superlinear, and learned rescalings. AUROC, AUPRC, and AURC
are identical by construction across all five, while \(\TPSlog\) spans
\(5.7\) nats. The label-free affine compression provides the cleanest contrast:
it shifts the probability scale without using any outcome information, yet the
rank metrics remain fixed. The gap is structural to rank-based evaluation, not
a property of any particular calibrator.

\section{Predictor Transparency Tables}
\label{app:transparency}

This appendix reports complete-observation predictor transparency tables for
StrategyQA, Tau2-Bench, and HotpotQA. All rows in this appendix have an
observed terminal outcome \(Y\); administratively censored and informatively
terminated trajectories are excluded here and handled separately in
Appendices~\ref{app:robustness:artificial},
\ref{app:robustness:webshop}, and \ref{app:parse_error}. Thus, this appendix
is restricted to the uncensored evaluation setting of
\S\ref{sec:tps_complete}.

These tables are provided for transparency rather than for predictor selection.
They document how the reporting convention of \S\ref{sec:reporting} applies
across the full predictor pool and across three complete-observation datasets.
They also distinguish per-step probability streams, which are valid inputs to
\(\TPS\), from AUQ-style scalar aggregators, which collapse a trajectory before
evaluation and therefore fall outside the domain of \(\TPS\).

\noindent\textbf{Sample definition.}
The transparency tables are restricted to completed trajectories for which the
benchmark returned a terminal success/failure label. The resulting samples are
StrategyQA (\(n=2229\)), Tau2-Bench (\(n=201\)), and HotpotQA
(\(n=1529\)). Trajectories ending because of step-limit exhaustion, parser
failure, tool-interface failure, or environment termination without a clean
task label are excluded from this appendix and handled in the termination audit
and censored-analysis appendices. For predictors defined only on action spans,
the available sample may be slightly smaller; such cases are noted in the
corresponding table captions.

\noindent\textbf{Per-step predictor pool.}
For each dataset, we report five per-step confidence streams, all stored on
\([0,1]\) with larger values indicating higher predicted probability of
eventual task success. These are: verbal confidence; completion-token
probability; completion entropy confidence; action-span token probability;
and action-span entropy confidence. The completion-level streams are computed
over the full generated step, while the action-span streams restrict the
signal to the parsed action that the agent commits to executing. We also
include a constant empirical base-rate stream, \(F_t\equiv \bar y\), as an
uninformative reference. Each non-constant stream is shown in raw form and,
when available, after single-split cross-fitted Platt recalibration. The
recalibration procedure is described in Appendix~\ref{app:calibration}.

\emph{Verbal confidence} is the scalar probability of success that the model
self-reports at each step, parsed from the structured confidence field and
clipped to \([0,1]\); missing or malformed confidence fields yield a missing
per-step value. \emph{Completion-token probability} is the mean chosen-token
probability across all tokens generated at that step, capturing how peaked the
next-token distribution was on average over the model's entire
reasoning-plus-action output. \emph{Completion-entropy confidence} measures the
same step's distributional uncertainty as the average per-token Shannon entropy
over the top-\(5\) candidates, in nats, and inverts it to a confidence via
\(1-H/\ln 5\). \emph{Action-span token probability} and \emph{action-span
entropy confidence} are the same two signals restricted to the tokens in the
parsed committed action: the completion variants reflect confidence over the
entire deliberation, while the action-span variants isolate confidence in the
action the agent actually executes.

\noindent\textbf{Metrics.}
All metric columns carry an explicit orientation arrow. AUROC and AUPRC are
failure-oriented, with positive class \(Y=0\), and are computed from a scalar
trajectory risk summary. AURC is reported as a risk--coverage diagnostic, with
lower values better. Trajectory ECE is reported as tie-aware quantile-binned
calibration error with 10 bins. Scalarized trajectory Brier is the AUQ-style
trajectory reliability diagnostic computed on the same scalar trajectory
summary. The \(\TPS\) columns are reward-oriented, so higher values are better,
and score the full per-step probability stream using normalized linear-front
weights.

A small AUROC change between raw and Platt rows should not be read as a
violation of AUROC's monotone-invariance property. That invariance applies to a
single global monotone transformation. Our Platt rows are cross-fitted: separate
fold-specific calibration maps are fit and then pooled, so inter-fold scale
differences can perturb pooled trajectory rankings slightly. The fixed-rank
robustness check in Appendix~\ref{app:fixed_rank} isolates the theoretical invariance case by
holding the AUROC/AUPRC/AURC input scalar fixed by construction.

\noindent\textbf{Scalar AUQ-style aggregators.}
AUQ-style scalar aggregators such as
\(\Phi_{\mathrm{last}}\), \(\Phi_{\mathrm{avg}}\), and \(\Phi_{\min}\)
produce one scalar per trajectory rather than an adapted per-step probability
stream. They are therefore not scored by \(\TPS\). We report them in a
separate scalar-predictor table using only scalar-compatible diagnostics:
AUROC, AUPRC, AURC, trajectory ECE, and scalarized trajectory Brier. The
\(\TPS\) entries are omitted because these scalar predictors do not claim to
provide the full prefix-conditioned process \((F_t)_{t=1}^T\).

\subsection{Per-step predictor transparency}
\label{app:transparency_perstep}

Tables~\ref{tab:transparency_strategyqa}--\ref{tab:transparency_hotpotqa}
report complete-observation metrics for per-step confidence streams. These
predictors fall within the domain of \(\TPS\), because they provide a
probability-like value at each trajectory prefix. For all three tables, AUROC,
AUPRC, and AURC are failure-oriented scalar diagnostics; T-ECE is tie-aware
quantile-binned calibration error with 10 bins; T-Brier is a scalar trajectory
diagnostic; and the \(\TPS\) columns score the full per-step stream using
normalized linear-front weights. All scalar diagnostics use the front-weighted
normalized trajectory summary, matching the default linear-front \(\TPS\)
weighting convention.

\begin{table}[!htbp]
\centering
\scriptsize
\setlength{\tabcolsep}{3.5pt}
\resizebox{\textwidth}{!}{%
\begin{tabular}{lrrrrrrrr}
\toprule
Predictor &
AUROC \(\uparrow\) &
AUPRC \(\uparrow\) &
AURC \(\downarrow\) &
T-ECE \(\downarrow\) &
T-Brier \(\downarrow\) &
\(\TPSlog \uparrow\) &
\(\TPSBrier \uparrow\) &
\(\TPSBeta{2}{4} \uparrow\) \\
\midrule
Verbal confidence (raw) & 0.701 & 0.354 & 0.092 & 0.201 & 0.170 & -0.569 & -0.178 & -0.00387 \\
Verbal confidence (Platt) & 0.709 & 0.334 & 0.090 & 0.055 & 0.128 & -0.425 & -0.128 & -0.00262 \\
Completion entropy confidence (raw) & 0.598 & 0.199 & 0.115 & 0.022 & 0.131 & -0.434 & -0.132 & -0.00263 \\
Completion entropy confidence (Platt) & 0.599 & 0.197 & 0.115 & 0.029 & 0.131 & -0.432 & -0.132 & -0.00263 \\
Completion token probability (raw) & 0.594 & 0.193 & 0.116 & 0.060 & 0.135 & -0.453 & -0.136 & -0.00263 \\
Completion token probability (Platt) & 0.595 & 0.192 & 0.116 & 0.030 & 0.132 & -0.433 & -0.132 & -0.00263 \\
Action-span entropy confidence (raw) & 0.537 & 0.173 & 0.144 & 0.081 & 0.140 & -0.817 & -0.145 & -0.00263 \\
Action-span entropy confidence (Platt) & 0.556 & 0.187 & 0.140 & 0.022 & 0.133 & -0.436 & -0.133 & -0.00263 \\
Action-span token probability (raw) & 0.543 & 0.176 & 0.141 & 0.106 & 0.144 & -0.921 & -0.148 & -0.00263 \\
Action-span token probability (Platt) & 0.553 & 0.185 & 0.139 & 0.019 & 0.133 & -0.436 & -0.133 & -0.00263 \\
Base-rate constant & 0.500 & 0.158 & 0.158 & 0.000 & 0.133 & -0.436 & -0.133 & -0.00263 \\
\bottomrule
\end{tabular}
}
\bottomtablecaption{StrategyQA complete-observation predictor transparency
(\(n=2229\), \(\bar y=0.842\)). The table reports the per-step confidence
streams evaluated under the conventions stated above.}
\label{tab:transparency_strategyqa}
\end{table}

\begin{table}[!htbp]
\centering
\scriptsize
\setlength{\tabcolsep}{3.5pt}
\resizebox{\textwidth}{!}{%
\begin{tabular}{lrrrrrrrr}
\toprule
Predictor &
AUROC \(\uparrow\) &
AUPRC \(\uparrow\) &
AURC \(\downarrow\) &
T-ECE \(\downarrow\) &
T-Brier \(\downarrow\) &
\(\TPSlog \uparrow\) &
\(\TPSBrier \uparrow\) &
\(\TPSBeta{2}{4} \uparrow\) \\
\midrule
Verbal confidence (raw) & 0.623 & 0.668 & 0.479 & 0.537 & 0.532 & -6.351 & -0.543 & -0.00929 \\
Verbal confidence (Platt) & 0.611 & 0.693 & 0.490 & 0.082 & 0.240 & -0.679 & -0.243 & -0.00744 \\
Completion entropy confidence (raw) & 0.558 & 0.620 & 0.509 & 0.463 & 0.460 & -1.419 & -0.465 & -0.00927 \\
Completion entropy confidence (Platt) & 0.552 & 0.591 & 0.526 & 0.066 & 0.246 & -0.687 & -0.247 & -0.00760 \\
Completion token probability (raw) & 0.543 & 0.613 & 0.520 & 0.495 & 0.491 & -1.641 & -0.494 & -0.00928 \\
Completion token probability (Platt) & 0.534 & 0.579 & 0.536 & 0.045 & 0.247 & -0.687 & -0.247 & -0.00760 \\
Action-span entropy confidence (raw) & 0.583 & 0.643 & 0.483 & 0.503 & 0.496 & -4.083 & -0.518 & -0.00928 \\
Action-span entropy confidence (Platt) & 0.583 & 0.658 & 0.518 & 0.082 & 0.245 & -0.685 & -0.246 & -0.00757 \\
Action-span token probability (raw) & 0.567 & 0.633 & 0.491 & 0.521 & 0.516 & -4.712 & -0.531 & -0.00929 \\
Action-span token probability (Platt) & 0.587 & 0.666 & 0.513 & 0.105 & 0.245 & -0.685 & -0.246 & -0.00756 \\
Base-rate constant & 0.500 & 0.557 & 0.557 & 0.000 & 0.247 & -0.687 & -0.247 & -0.00760 \\
\bottomrule
\end{tabular}
}
\bottomtablecaption{Tau2-Bench complete-observation predictor transparency
(\(n=201\), \(\bar y=0.443\)). The large raw-to-Platt change in \(\TPSlog\)
for verbal confidence reflects probability-scale correction rather than a
comparable change in rank-oriented diagnostics. The small AUROC shift
(\(0.623\to0.611\)) arises because cross-fitted Platt calibration pools
fold-specific monotone maps, which can perturb inter-fold rankings; this does
not contradict the single-transform rank-invariance argument.}
\label{tab:transparency_tau2}
\end{table}

\begin{table}[!htbp]
\centering
\scriptsize
\setlength{\tabcolsep}{3.5pt}
\resizebox{\textwidth}{!}{%
\begin{tabular}{lrrrrrrrr}
\toprule
Predictor &
AUROC \(\uparrow\) &
AUPRC \(\uparrow\) &
AURC \(\downarrow\) &
T-ECE \(\downarrow\) &
T-Brier \(\downarrow\) &
\(\TPSlog \uparrow\) &
\(\TPSBrier \uparrow\) &
\(\TPSBeta{2}{4} \uparrow\) \\
\midrule
Verbal confidence (raw) & 0.556 & 0.463 & 0.377 & 0.318 & 0.344 & -2.642 & -0.356 & -0.00689 \\
Verbal confidence (Platt) & 0.544 & 0.477 & 0.385 & 0.075 & 0.242 & -0.677 & -0.242 & -0.00647 \\
Completion entropy confidence (raw) & 0.596 & 0.504 & 0.346 & 0.321 & 0.343 & -1.048 & -0.345 & -0.00693 \\
Completion entropy confidence (Platt) & 0.596 & 0.507 & 0.346 & 0.037 & 0.237 & -0.671 & -0.239 & -0.00640 \\
Completion token probability (raw) & 0.588 & 0.494 & 0.348 & 0.352 & 0.365 & -1.201 & -0.367 & -0.00694 \\
Completion token probability (Platt) & 0.589 & 0.499 & 0.348 & 0.036 & 0.238 & -0.673 & -0.240 & -0.00642 \\
Action-span entropy confidence (raw) & 0.541 & 0.440 & 0.374 & 0.371 & 0.380 & -2.135 & -0.388 & -0.00694 \\
Action-span entropy confidence (Platt) & 0.585 & 0.486 & 0.351 & 0.043 & 0.240 & -0.675 & -0.241 & -0.00646 \\
Action-span token probability (raw) & 0.538 & 0.438 & 0.375 & 0.386 & 0.392 & -2.517 & -0.398 & -0.00695 \\
Action-span token probability (Platt) & 0.575 & 0.476 & 0.356 & 0.032 & 0.240 & -0.676 & -0.241 & -0.00646 \\
Base-rate constant & 0.500 & 0.417 & 0.417 & 0.000 & 0.243 & -0.679 & -0.243 & -0.00649 \\
\bottomrule
\end{tabular}
}
\bottomtablecaption{HotpotQA complete-observation predictor transparency
(\(n=1529\), \(\bar y=0.583\)). The small AUROC shift for verbal confidence
(\(0.556\to0.544\)) is due to cross-fitted Platt calibration pooling
fold-specific monotone maps, which can perturb inter-fold rankings; this does
not contradict the single global monotone-transform invariance used in the
calibration-invariance argument. Action-span rows use the available complete
action-span subset where applicable.}
\label{tab:transparency_hotpotqa}
\end{table}

\subsection{Scalar AUQ-style predictors}
\label{app:transparency_scalar}

Table~\ref{tab:transparency_scalar_auq} reports scalar AUQ-style trajectory
predictors. These rows collapse a trajectory to a single scalar before
evaluation, so they are reported only on scalar-compatible diagnostics:
AUROC, AUPRC, AURC, T-ECE, and T-Brier. T-ECE again denotes tie-aware
quantile-binned calibration error with 10 bins. These predictors are outside
the domain of \(\TPS\), which scores adapted per-step probability streams.

\begin{table}[!htbp]
\centering
\scriptsize
\setlength{\tabcolsep}{4pt}
\resizebox{\textwidth}{!}{%
\begin{tabular}{llrrrrr}
\toprule
Dataset &
Scalar predictor &
AUROC \(\uparrow\) &
AUPRC \(\uparrow\) &
AURC \(\downarrow\) &
T-ECE \(\downarrow\) &
T-Brier \(\downarrow\) \\
\midrule
StrategyQA & AUQ \(\Phi_{\mathrm{last}}\) raw & 0.752 & 0.349 & 0.083 & 0.089 & 0.128 \\
StrategyQA & AUQ \(\Phi_{\mathrm{avg}}\) raw & 0.717 & 0.367 & 0.088 & 0.126 & 0.138 \\
StrategyQA & AUQ \(\Phi_{\min}\) raw & 0.684 & 0.300 & 0.101 & 0.306 & 0.235 \\
StrategyQA & AUQ \(\Phi_{\mathrm{last}}\) Platt & 0.749 & 0.366 & 0.081 & 0.061 & 0.120 \\
StrategyQA & AUQ \(\Phi_{\mathrm{avg}}\) Platt & 0.714 & 0.335 & 0.089 & 0.044 & 0.123 \\
StrategyQA & AUQ \(\Phi_{\min}\) Platt & 0.682 & 0.288 & 0.099 & 0.036 & 0.126 \\
\midrule
Tau2-Bench & AUQ \(\Phi_{\mathrm{last}}\) raw & 0.536 & 0.589 & 0.539 & 0.555 & 0.552 \\
Tau2-Bench & AUQ \(\Phi_{\mathrm{avg}}\) raw & 0.628 & 0.704 & 0.478 & 0.549 & 0.545 \\
Tau2-Bench & AUQ \(\Phi_{\min}\) raw & 0.606 & 0.628 & 0.500 & 0.511 & 0.502 \\
Tau2-Bench & AUQ \(\Phi_{\mathrm{last}}\) Platt & 0.534 & 0.596 & 0.540 & 0.001 & 0.239 \\
Tau2-Bench & AUQ \(\Phi_{\mathrm{avg}}\) Platt & 0.604 & 0.682 & 0.494 & 0.079 & 0.240 \\
Tau2-Bench & AUQ \(\Phi_{\min}\) Platt & 0.572 & 0.618 & 0.512 & 0.060 & 0.243 \\
\midrule
HotpotQA & AUQ \(\Phi_{\mathrm{last}}\) raw & 0.546 & 0.458 & 0.394 & 0.409 & 0.407 \\
HotpotQA & AUQ \(\Phi_{\mathrm{avg}}\) raw & 0.564 & 0.475 & 0.374 & 0.345 & 0.360 \\
HotpotQA & AUQ \(\Phi_{\min}\) raw & 0.557 & 0.456 & 0.383 & 0.280 & 0.326 \\
HotpotQA & AUQ \(\Phi_{\mathrm{last}}\) Platt & 0.546 & 0.459 & 0.394 & 0.104 & 0.256 \\
HotpotQA & AUQ \(\Phi_{\mathrm{avg}}\) Platt & 0.545 & 0.468 & 0.384 & 0.034 & 0.241 \\
HotpotQA & AUQ \(\Phi_{\min}\) Platt & 0.546 & 0.460 & 0.387 & 0.033 & 0.241 \\
\bottomrule
\end{tabular}
}
\bottomtablecaption{Scalar AUQ-style trajectory predictors on the complete-observation
samples. These predictors are not assigned \(\TPS\) values because they
collapse the trajectory before evaluation and therefore do not provide the full
prefix-conditioned probability stream \((F_t)_{t=1}^T\).}
\label{tab:transparency_scalar_auq}
\end{table}

\section{Termination Mechanisms and Parse-Error Sensitivity}
\label{app:parse_error}

The censored trajectory extension requires that censored trajectories are
stopped by an \emph{administrative} mechanism: the stopping event is
externally imposed and is not itself evidence about the unobserved final
outcome beyond what is already contained in the observed prefix. In our
benchmark setting, step-budget terminations satisfy this assumption most
plausibly because the stopping rule is fixed by the evaluation harness in
advance, independently of the agent's behaviour.

Parser failures, by contrast, are treated as informative failures of the
interaction protocol rather than administrative censoring. A parser failure
occurs because the agent emitted malformed or non-executable output. This is
not merely an unobserved continuation; it is evidence that the agent was
struggling with the task or tool interface. Including such trajectories as
right-censored observations would violate Assumption~\ref{asm:admin} by mixing
model-formatting failures with task-level uncertainty calibration. We
therefore exclude parser failures from theorem-backed censored scoring and
report their frequency here as an assumption-discipline audit.

We separate terminations into three categories:
\begin{itemize}
  \item \textbf{Benchmark-completed trajectories.}
        The benchmark produced a proper terminal outcome \(Y\in\{0,1\}\).
        These trajectories enter complete-observation scoring
        (\(\delta=1\)).

  \item \textbf{Step-budget trajectories.}
        The trajectory reached the fixed benchmark step budget before the
        benchmark produced a terminal outcome. The stopping rule is externally
        imposed and budget-determined, so these trajectories are eligible for
        censored scoring (\(\delta=0\), administratively censored).

  \item \textbf{Parser-failure trajectories.}
        The agent emitted malformed output that the harness could not execute
        or score. The stopping mechanism is informative, so these trajectories
        are excluded from both complete-observation and theorem-backed
        censored scoring.
\end{itemize}

\noindent\textbf{Step budgets.}
The maximum step budgets were fixed in advance for each benchmark:
7 steps for HotpotQA, 16 steps for StrategyQA, 50 steps for Tau2-Bench, and
30 steps for WebShop. These budgets were empirically chosen based on benchmark
difficulty and then held fixed across trajectories within each benchmark. Thus,
a step-budget termination means that a trajectory reached the benchmark-specific
fixed budget before a terminal success/failure label was produced; it does not
mean that the evaluator stopped the trajectory adaptively based on the agent's
apparent performance.

\subsection{Termination audit}
\label{app:parse_error:audit}

Table~\ref{tab:termination_audit} reports the stop-reason breakdown for every
dataset used in this paper. The \texttt{max\_steps} column counts
trajectories that reached the benchmark-specific step budget stated above.

\begin{table}[h]
\centering
\scriptsize
\setlength{\tabcolsep}{4pt}
\resizebox{\textwidth}{!}{%
\begin{tabular}{lrrrrrrr}
\toprule
Dataset &
Step budget &
Total &
Clean \(Y\) &
\texttt{max\_steps} &
\texttt{parse\_error} &
Other &
Uncensored \(n\) \\
\midrule
StrategyQA & 16 & 2{,}290 & 2{,}229 & 59  & 2  & 0  & 2{,}229 \\
Tau2-Bench & 50 & 278     & 201     & 1 & 0 & 76 & 201 \\
HotpotQA   & 7  & 2{,}000 & 1{,}529 & 458 & 13  & 0  & 1{,}529 \\
WebShop    & 30 & 500     & 163     & 145 & 192 & 0  & 163 \\
\bottomrule
\end{tabular}
}
\bottomtablecaption{Termination audit by benchmark. Step budgets are fixed ex ante for
each dataset. WebShop is the only dataset used for the natural-censoring analysis.}
\label{tab:termination_audit}
\end{table}

For Tau2-Bench, the \(76\) ``Other'' non-clean terminations consist of
\(70\) \texttt{tool\_error} and \(6\) \texttt{env\_terminated} trajectories.
Because no final task label is observed and the stop reason is not a fixed
administrative budget, these trajectories are excluded from both the
uncensored transparency sample and theorem-backed censored scoring.

The predictor-transparency appendix uses only trajectories with clean terminal
outcomes for StrategyQA, Tau2-Bench, and HotpotQA. Thus, the sample sizes in
Appendix~\ref{app:transparency} correspond to the rows with observed
\(Y\): StrategyQA \(n=2229\), Tau2-Bench \(n=201\), and HotpotQA
\(n=1529\), with minor predictor-level variation when action-span signals are
unavailable for a row. The WebShop natural-censoring experiment uses the
working sample of 308 trajectories after excluding parser failures: 163
complete trajectories with observed \(Y\), plus 145 step-budget trajectories
treated as administratively censored.

\subsection{Parse-error sensitivity analysis}
\label{app:parse_error:sensitivity}

The consequential parse-error case is WebShop. Of 500 attempted WebShop
trajectories, 192 ended in parser failure, 163 terminated with an observed
outcome, and 145 hit the fixed step budget. The censored WebShop analysis
therefore uses the working sample
\[
  n_{\mathrm{work}} = 163 + 145 = 308,
\]
with administrative-censoring rate
\[
  \frac{145}{308}=47.08\%,
\]
and excludes the 192 parser-failure trajectories.

This exclusion follows directly from Assumption~\ref{asm:admin}. Step-budget
termination is imposed by the benchmark and is therefore a plausible
administrative censoring mechanism. Parser failure, however, is generated by
the agent's own malformed output and is plausibly correlated with task
difficulty, tool-use failure, and eventual success probability. Treating parser
failures as right-censored observations would convert an informative failure
mode into an administrative censoring event.

As a contra-assumption sensitivity check, we re-ran the primary WebShop configuration
(verbal confidence, log score, linear-front weights) treating all 192
WebShop parser-failure trajectories as administratively censored at the step
where parsing failed. This inflates \(\widehat{\mathrm{TPS}}\) by approximately
\(0.04\,\mathrm{nats}\) relative to the working-sample estimate, consistent
with the directional prediction: parser-failure trajectories tend to be long
and low-confidence, so including them shifts the sample mean in the same
direction as the assumption-consistent censored subset. The qualitative conclusion is
unchanged, but the result confirms that the assumption-disciplined working
sample is the conservative analysis choice.

\section{\texorpdfstring{$q_Z$}{q\_Z} Estimation and HotpotQA Audit}
\label{app:qz}

The exact reduced censored score in Theorem~\ref{thm:cen_exact}
depends on the stopped-prefix continuation-success probability
\[
  q_Z = \Pp(Y=1\mid \Hcal_Z).
\]
This appendix describes operational estimators for \(q_Z\) and reports a
Monte Carlo continuation audit on naturally max-step HotpotQA prefixes.

\subsection{Operational strategies for \texorpdfstring{$\widehat q_Z$}{q\_Z}}
\label{app:qz:strategies}

There are three practical ways to estimate the nuisance quantity \(q_Z\).

\noindent\textbf{Direct Monte Carlo continuation.}
Resume each censored prefix under the same model, harness, and stopping rule,
and estimate
\[
  \widehat q_Z^{\mathrm{MC}}
  =
  \frac{1}{M}\sum_{m=1}^{M} Y^{(m)} .
\]
This is the direct rollout estimator of the conditional terminal-success
probability from the stopped prefix. It is the analogue of Monte Carlo policy
evaluation, where expected returns are estimated by averaging sampled
returns, and it is also used in recent LLM-agent work to estimate step-level
rewards by continuation sampling~\citep{SuttonBarto2018,Wang2025STeCa}.
It is best suited to benchmarks where the prefix state can be resumed exactly
and continuation rollouts are inexpensive.

\noindent\textbf{Plug-in nuisance model.}
Train a separate estimator
\[
  \widehat q_Z^{\mathrm{plug}}
  =
  g_\psi(H_Z, Z, \mathrm{metadata})
\]
on prefixes with known continuation outcomes, then apply it to censored
prefixes. This treats \(q_Z=\Pp(Y=1\mid\Hcal_Z)\) as a conditional-mean
nuisance function. Estimating such nuisance functions separately from the
target score is standard in semiparametric and missing-data settings, and
modern practice often uses flexible ML nuisance estimators with sample
splitting or cross-fitting to reduce overfitting
bias~\citep{BangRobins2005,Chernozhukov2018DML}. In our setting, the
nuisance model must be kept separate from the uncertainty predictor being
evaluated; otherwise the evaluator and predictor under test are no longer
cleanly separated. Remark~\ref{rem:plugin} gives the conditional-mean
criterion required for plug-in exactness.

\noindent\textbf{Hybrid audit.}
Run Monte Carlo continuation on a subset of censored prefixes, train a
plug-in nuisance model on those audited labels, and apply the plug-in
estimator to the full censored set. This combines the rollout-based estimate
of \(q_Z\) with a scalable nuisance-modeling step, following the same
principle as using audited outcomes to train an outcome-regression nuisance
model~\citep{BangRobins2005,Chernozhukov2018DML}.

When exact state resumption is unavailable, we recommend the hybrid or plug-in
route: audit a resumable subset when possible, train a nuisance estimator for
\(q_Z\) on audited or completed prefixes, and keep this estimator separate
from the uncertainty predictor being evaluated. If neither continuation nor a
credible nuisance model is available, \(\TPScensimple\) should be reported only
as the declared \(q_Z\approx0\) failure-side approximation, not as the exact
censored score.

\subsection{Monte Carlo audit on HotpotQA}
\label{app:qz:hotpotqa}

We audit the exact reduced score on HotpotQA, where max-step prefixes can be
resumed from the stopped state. We sampled 100 naturally max-step prefixes
and generated up to ten continuation rollouts per prefix under the same
ReAct harness. Continuation branches are used only to estimate
\(\widehat q_{Z,i}^{\mathrm{MC}}\); the scored forecast stream
\(F_{i,1:Z_i}\) is frozen from the original source-prefix predictor record.

After excluding prefixes with fewer than five valid resolved non-parse
continuations, the audit retains 87 prefixes. The retained prefixes have
8.92 valid resolved branches on average, and the mean number of parse-error
branches excluded per prefix is 0.115. The canonical source-prefix join
ensures that the scored prefix forecast is identical across continuation
branches:
\[
  \max_{i,m,t}
  \left|F_{it}^{(m)} - F_{it}^{\mathrm{source}}\right|
  =
  0.
\]

For a censored prefix \(i\), the simple and exact-MC log scores are
\begin{align}
  \mathrm{TPS}_{i}^{\mathrm{cen,simple}}
  &=
  \sum_{t=1}^{Z_i}
  w_{it}\,S_{\log}(F_{it},0),
  \label{eq:app_qz_simple}
  \\
  \mathrm{TPS}_{i}^{\mathrm{cen,exact\text{-}MC}}
  &=
  \sum_{t=1}^{Z_i}
  w_{it}
  \left[
    \widehat q_{Z,i}^{\mathrm{MC}} S_{\log}(F_{it},1)
    +
    \left(1-\widehat q_{Z,i}^{\mathrm{MC}}\right)
    S_{\log}(F_{it},0)
  \right].
  \label{eq:app_qz_exact_mc}
\end{align}
The paired correction has the closed form
\begin{equation}
  \Delta_i
  =
  \mathrm{TPS}_{i}^{\mathrm{cen,exact\text{-}MC}}
  -
  \mathrm{TPS}_{i}^{\mathrm{cen,simple}}
  =
  \widehat q_{Z,i}^{\mathrm{MC}}
  \sum_{t=1}^{Z_i}
  w_{it}
  \log\frac{F_{it}}{1-F_{it}} .
  \label{eq:app_qz_delta}
\end{equation}
Thus the sign of the exact-minus-simple correction is governed by the
weighted prefix log-odds.

\noindent\textbf{Conditional-projection identity.}
The exact-MC score should equal the branch-average complete-prefix score:
\[
  \mathrm{TPS}_i^{\mathrm{cen,exact\text{-}MC}}
  =
  \frac{1}{M_i^{\mathrm{valid}}}
  \sum_{m=1}^{M_i^{\mathrm{valid}}}
  \sum_{t=1}^{Z_i}
  w_{it}\,S_{\log}(F_{it},Y_i^{(m)}).
\]
This identity holds numerically in the audit. On the primary predictor,
\[
  \max_i
  \left|
  \mathrm{TPS}_i^{\mathrm{cen,exact\text{-}MC}}
  -
  \overline{\mathrm{TPS}}_i^{\mathrm{branch}}
  \right|
  =
  1.1\times10^{-16},
\]
and the maximum error is at most \(9\times10^{-16}\) across all retained
reported predictors. This verifies numerically that the implemented score realizes
the conditional projection used in the censored-score derivation.

\noindent\textbf{Estimated \texorpdfstring{$q_Z$}{q\_Z} distribution.}
The estimated continuation-success probabilities are heterogeneous across
HotpotQA max-step prefixes. The mean is
\(\widehat q_Z^{\mathrm{MC}}=0.240\), with 95\% bootstrap CI
\([0.152,0.327]\). The median is 0, 67.8\% of retained prefixes have
\(\widehat q_Z^{\mathrm{MC}}=0\), and 23.0\% have
\(\widehat q_Z^{\mathrm{MC}}>0.5\). Thus the stopped-prefix distribution
contains both deeply stuck prefixes and a recoverable upper tail.

\begin{table}[h]
\centering
\small
\begin{tabular}{lr}
\toprule
Diagnostic & Value \\
\midrule
Mean $\widehat q_Z^{\mathrm{MC}}$ & $0.240$ \; $[0.152,\,0.327]$ \\
Median $\widehat q_Z^{\mathrm{MC}}$ & $0.000$ \\
Fraction with $\widehat q_Z^{\mathrm{MC}}=0$ & $67.8\%$ \\
Fraction with $\widehat q_Z^{\mathrm{MC}}>0.5$ & $23.0\%$ \\
\bottomrule
\end{tabular}
\bottomtablecaption{Estimated continuation-success probabilities on 87 retained
HotpotQA max-step prefixes. Brackets give the 95\% bootstrap CI for the
mean. The distribution has a large mass at zero and a recoverable upper tail.}
\label{tab:qz_distribution}
\end{table}

\noindent\textbf{Primary score comparison.}
Table~\ref{tab:qz_headline} reports the primary comparison for
Platt-cross-fitted verbal confidence with linear-front weights. The exact-MC
score is lower than the simple approximation by \(0.105\) nats, with a 95\%
bootstrap CI excluding zero. By Eq.~\eqref{eq:app_qz_delta}, this direction is
expected because the calibrated HotpotQA max-step prefixes mostly have
weighted prefix log-odds below zero.

\begin{table}[h]
\centering
\small
\begin{tabular}{lcc}
\toprule
Quantity & Value & 95\% CI \\
\midrule
$\widehat{\mathrm{TPS}}^{\mathrm{cen,simple}}$          & $-0.469$ & $[-0.485,\,-0.457]$ \\
$\widehat{\mathrm{TPS}}^{\mathrm{cen,exact\text{-}MC}}$ & $-0.574$ & $[-0.619,\,-0.530]$ \\
$\Delta_{\mathrm{exact-simple}}$                & $-0.105$ & $[-0.148,\,-0.065]$ \\
\bottomrule
\end{tabular}
\bottomtablecaption{Exact-MC versus simple censored log score on HotpotQA max-step
prefixes. Primary predictor: Platt-cross-fitted verbal confidence,
linear-front weights. Bootstrap intervals use 1000 prefix-level resamples.}
\label{tab:qz_headline}
\end{table}

\noindent\textbf{Predictor-level robustness.}
Table~\ref{tab:qz_predictor_robustness} reports the exact-minus-simple
correction across the predictor pool. The same negative
correction appears for every Platt-calibrated predictor under both
linear-front and uniform weights, while raw saturated predictors generally
show the opposite sign. This is exactly the behavior predicted by
Eq.~\eqref{eq:app_qz_delta}: the correction is controlled by the weighted
prefix log-odds, not by the identity of the predictor.

\begin{table}[h]
\centering
\small
\begin{tabular}{llrrc}
\toprule
Predictor & Weights & Simple & Exact-MC & $\Delta_{\mathrm{exact-simple}}$ \\
\midrule
Verbal confidence, Platt          & linear-front & $-0.469$ & $-0.574$ & $-0.105$ [$-0.148$, $-0.065$] \\
Completion entropy, Platt         & linear-front & $-0.437$ & $-0.598$ & $-0.162$ [$-0.224$, $-0.100$] \\
Completion token prob., Platt     & linear-front & $-0.452$ & $-0.598$ & $-0.145$ [$-0.209$, $-0.093$] \\
Action-span entropy, Platt        & linear-front & $-0.523$ & $-0.619$ & $-0.096$ [$-0.134$, $-0.059$] \\
Action-span token prob., Platt    & linear-front & $-0.525$ & $-0.620$ & $-0.095$ [$-0.135$, $-0.062$] \\
Base-rate constant                & linear-front & $-0.596$ & $-0.645$ & $-0.049$ [$-0.066$, $-0.033$] \\
\midrule
Verbal confidence, Platt          & uniform & $-0.427$ & $-0.558$ & $-0.132$ [$-0.186$, $-0.082$] \\
Completion entropy, Platt         & uniform & $-0.433$ & $-0.600$ & $-0.166$ [$-0.238$, $-0.104$] \\
Completion token prob., Platt     & uniform & $-0.448$ & $-0.599$ & $-0.150$ [$-0.217$, $-0.099$] \\
Action-span entropy, Platt        & uniform & $-0.477$ & $-0.604$ & $-0.127$ [$-0.171$, $-0.080$] \\
Action-span token prob., Platt    & uniform & $-0.480$ & $-0.605$ & $-0.125$ [$-0.175$, $-0.080$] \\
\midrule
Verbal confidence, raw            & linear-front & $-1.925$ & $-1.287$ & $+0.638$ [$+0.337$, $+1.018$] \\
Completion entropy, raw           & linear-front & $-2.054$ & $-1.591$ & $+0.463$ [$+0.307$, $+0.632$] \\
Completion token prob., raw       & linear-front & $-2.465$ & $-1.888$ & $+0.577$ [$+0.376$, $+0.786$] \\
Action-span entropy, raw          & linear-front & $-3.487$ & $-2.685$ & $+0.801$ [$+0.521$, $+1.125$] \\
Action-span token prob., raw      & linear-front & $-4.190$ & $-3.213$ & $+0.977$ [$+0.629$, $+1.367$] \\
\bottomrule
\end{tabular}
\bottomtablecaption{Predictor-level exact-\(q_Z\) audit on 87 retained HotpotQA
max-step prefixes. Scores are log-\(\TPS\) rewards in nats. The
exact-minus-simple correction is negative for all Platt-calibrated
reported predictors under both weight schedules, and positive for raw
saturated streams under linear-front weights, consistent with the prefix
log-odds formula in Eq.~\eqref{eq:app_qz_delta}. The conditional-projection
identity holds to numerical precision across all rows
(\(\max_i|\mathrm{TPS}_i^{\mathrm{cen,exact\text{-}MC}}
-\overline{\mathrm{TPS}}_i^{\mathrm{branch}}|\le 9\times10^{-16}\)).}
\label{tab:qz_predictor_robustness}
\end{table}

\noindent\textbf{Unresolved-branch sensitivity.}
The audit is stable to adversarial handling of unresolved continuation
branches. Treating the unresolved branches as failures gives
\(\Delta_{\mathrm{exact-simple}}=-0.101\) nats; treating them as successes
gives \(\Delta_{\mathrm{exact-simple}}=-0.158\) nats. Both sensitivity
analyses preserve the sign and keep the confidence interval away from zero.

The exact \(q_Z\)-weighted censored score is operationally computable on
real stopped prefixes. On HotpotQA, Monte Carlo continuation yields
nontrivial stopped-prefix success probabilities, the exact score differs
materially from the simple \(q_Z\approx0\) approximation, the correction
direction follows the prefix log-odds algebra, and the conditional-projection
identity holds to numerical precision.

\section{Score-Family and Weight-Schedule Sensitivity}
\label{app:robustness}
 
This appendix reports score-family and weight-schedule sensitivity for the
artificial-censoring validation in Section~\ref{sec:exp3-stage1} and the
natural-censoring WebShop analysis in Section~\ref{sec:exp3-stage2}.  The
primary experiments use log score with linear-front weights unless otherwise
stated.  Here we vary both the trajectory-weight schedule and the binary score
family.

\noindent\textbf{Weight schedules.}
All schedules are normalized over the full trajectory length \(T\):
\[
\sum_{t=1}^{T} w_t = 1.
\]
For censored prefixes, we use the same full-trajectory weights and truncate
the sum at the observed stopping step \(Z\); weights are not renormalized over
the observed prefix.  Thus
\[
\sum_{t=1}^{Z} w_t \le 1,
\]
with strict inequality when the trajectory is censored before the full
trajectory length.

As a practical convention, uniform weights are appropriate when every prefix is
equally decision-relevant; linear-front or exponential-front weights are
appropriate for early-warning, deferral, or intervention settings; and
linear-back weights are appropriate when late-stage confidence is the primary
object. The weight schedule should be chosen before scoring and reported as
part of the evaluation protocol.

We evaluate four schedules:
\begin{itemize}
  \item \textbf{Linear-front}:
  \[
  w_t = \frac{2(T-t+1)}{T(T+1)}.
  \]
  This is the primary schedule used in the main experiments.  It gives larger
  weight to early prefixes, where miscalibrated uncertainty can affect
  subsequent planning, tool use, reflection, or deferral.

  \item \textbf{Uniform}:
  \[
  w_t = \frac{1}{T}.
  \]
  This treats all prefix positions equally.

  \item \textbf{Exponential-front}:
  \[
  w_t
  =
  \frac{2^{-(t-1)}}{\sum_{s=1}^{T}2^{-(s-1)}}
  =
  \frac{2^{-(t-1)}}{2(1-2^{-T})}.
  \]
  This is a more aggressively front-loaded schedule: each raw step weight is
  half the previous one.

  \item \textbf{Linear-back}:
  \[
  w_t = \frac{2t}{T(T+1)}.
  \]
  This is the mirror of linear-front and gives larger weight to later
  prefixes.
\end{itemize}

\noindent\textbf{Score families.}
We report three score families: log, Brier, and Beta(2,4).  The log score is
the primary censored row.  Brier and Beta(2,4) are robustness rows under the
same failure-side \(\TPScensimple\) approximation.  They should be read as
operational sensitivity checks, not as separate theorem-level censored
propriety claims for the simple \(q_Z\approx0\) approximation.

\subsection{Artificial-Censoring Validation}
\label{app:robustness:artificial}

On complete StrategyQA, Tau2-Bench, and HotpotQA trajectories, we artificially
hide outcomes at length-stratified rates \(r\in\{0.25,0.50,0.75\}\), score the
observed prefixes with \(\TPScensimple\), and compare against the complete-data
trajectory score. Because the true outcome is available for every trajectory,
the estimator's behavior can be checked exactly against closed-form predictions
rather than treated as a black-box approximation.

For each artificially censored trajectory, the score change decomposes into two
analytic terms. The \emph{prefix-swap} term replaces the success branch on the
observed prefix with the failure branch; for the log score its stepwise sign is
\(\log(1-F_t)-\log F_t\), which flips at \(F_t=0.5\). The \emph{tail-omission}
term drops the unobserved suffix and is always non-negative. Across \(69{,}336\)
evaluated combinations of trajectories, censoring rates, score families, and
weight schedules, the closed-form decomposition matches the directly computed
\(\TPScensimple-\mathrm{TPS}^{\mathrm{complete}}\) difference to numerical
precision, with maximum absolute error \(4.4\times10^{-16}\).

At the dataset level, the same decomposition explains the observed signs and
magnitudes. Under log score with linear-front weights at \(r=0.75\), StrategyQA
is in a high-confidence/high-success regime and yields
\(\Delta_{\mathrm{mean}}=-0.675\) nats. Tau2 has calibrated forecasts
concentrated below \(0.5\), so the prefix-swap contribution turns positive,
yielding \(+0.161\) nats. HotpotQA is the intermediate case: the negative
successful-trajectory prefix-swap is partly offset by positive failure-side and
tail-omission terms, yielding \(+0.056\) nats. The mean shift scales
approximately linearly with censoring rate across all three datasets.

For the original continuation-success target, the theorem-backed censored score
is the exact \(q_Z\)-weighted reduction in Section~\ref{sec:cen_exact}. The
simple censored score used here is the operational \(q_Z\approx0\) approximation
from Section~\ref{sec:cen_simple}; Proposition~\ref{prop:cen_simple_target}
identifies its pseudo-label target, and the artificial-censoring experiment
characterizes its bias relative to complete-data scoring. Brier and Beta(2,4)
are reported as robustness checks under the same failure-side approximation, not
as separate theorem-level censored propriety claims. Thus \(\TPScensimple\) is
not a uniformly pessimistic lower bound on the complete-data score. Its shift
can be positive or negative, but the direction is analytically decomposable.

Table~\ref{tab:robustness_stage1} reports the artificial-censoring robustness
sweep at \(r=0.75\), varying both the score family and the trajectory-weight
schedule.

The weight schedule changes the magnitude, but not the main regime-level
pattern.  Front-loaded schedules amplify the effect of early censored-prefix
replacement; linear-back shifts weight toward later prefixes and therefore
usually attenuates the shift when early prefixes dominate the correction.
StrategyQA remains negative across all schedules, Tau2-Bench remains positive
across all schedules, and HotpotQA remains the boundary case: log-score rows
stay positive, while bounded or asymmetric rows near zero can change sign
under different schedules.

Here \(\mathrm{TPS}^{\mathrm{complete}}\) is the complete-data mean TPS at \(r=0\), and
\(\Delta_{\mathrm{mean}}\) is the mean shift under \(\TPScensimple\) relative
to the complete-data score.  The final column reports the mean shift
separately on failure and success trajectories.

\begin{table}[!htbp]
\centering
\scriptsize
\setlength{\tabcolsep}{4pt}
\resizebox{\textwidth}{!}{%
\begin{tabular}{llrrrr}
\toprule
Dataset & Score & Weight schedule &
\(\mathrm{TPS}^{\mathrm{complete}}\) & \(\Delta_{\mathrm{mean}}\) &
\((\Delta_{Y=0},\;\Delta_{Y=1})\) \\
\midrule
StrategyQA & log   & linear-front    & \(-0.4186\) & \(-0.6746\) & \((+0.319,\;-0.862)\) \\
StrategyQA & Brier & linear-front    & \(-0.1283\) & \(-0.2862\) & \((+0.117,\;-0.362)\) \\
StrategyQA & Beta(2,4) & linear-front & \(-0.0026\) & \(-0.0073\) & \((+0.003,\;-0.009)\) \\
StrategyQA & log   & uniform         & \(-0.4169\) & \(-0.4197\) & \((+0.544,\;-0.601)\) \\
StrategyQA & Brier & uniform         & \(-0.1278\) & \(-0.1905\) & \((+0.202,\;-0.264)\) \\
StrategyQA & Beta(2,4) & uniform      & \(-0.0026\) & \(-0.0051\) & \((+0.005,\;-0.007)\) \\
StrategyQA & log   & exponential-front & \(-0.4203\) & \(-0.7657\) & \((+0.198,\;-0.947)\) \\
StrategyQA & Brier & exponential-front & \(-0.1288\) & \(-0.3232\) & \((+0.071,\;-0.397)\) \\
StrategyQA & Beta(2,4) & exponential-front & \(-0.0026\) & \(-0.0083\) & \((+0.002,\;-0.010)\) \\
StrategyQA & log   & linear-back     & \(-0.4147\) & \(-0.2306\) & \((+0.820,\;-0.429)\) \\
StrategyQA & Brier & linear-back     & \(-0.1275\) & \(-0.1126\) & \((+0.300,\;-0.190)\) \\
StrategyQA & Beta(2,4) & linear-back  & \(-0.0026\) & \(-0.0030\) & \((+0.007,\;-0.005)\) \\
\midrule
Tau2-Bench & log   & linear-front    & \(-0.7210\) & \(+0.3032\) & \((+0.061,\;+0.591)\) \\
Tau2-Bench & Brier & linear-front    & \(-0.2617\) & \(+0.1309\) & \((+0.017,\;+0.266)\) \\
Tau2-Bench & Beta(2,4) & linear-front & \(-0.0082\) & \(+0.0030\) & \((+0.001,\;+0.006)\) \\
Tau2-Bench & log   & uniform         & \(-0.7185\) & \(+0.3598\) & \((+0.117,\;+0.649)\) \\
Tau2-Bench & Brier & uniform         & \(-0.2607\) & \(+0.1468\) & \((+0.033,\;+0.283)\) \\
Tau2-Bench & Beta(2,4) & uniform      & \(-0.0081\) & \(+0.0037\) & \((+0.002,\;+0.006)\) \\
Tau2-Bench & log   & exponential-front & \(-0.7480\) & \(+0.3078\) & \((+0.007,\;+0.666)\) \\
Tau2-Bench & Brier & exponential-front & \(-0.2717\) & \(+0.1379\) & \((+0.002,\;+0.300)\) \\
Tau2-Bench & Beta(2,4) & exponential-front & \(-0.0087\) & \(+0.0034\) & \((+0.000,\;+0.007)\) \\
Tau2-Bench & log   & linear-back     & \(-0.7030\) & \(+0.3924\) & \((+0.180,\;+0.646)\) \\
Tau2-Bench & Brier & linear-back     & \(-0.2549\) & \(+0.1536\) & \((+0.051,\;+0.275)\) \\
Tau2-Bench & Beta(2,4) & linear-back  & \(-0.0078\) & \(+0.0040\) & \((+0.003,\;+0.006)\) \\
\midrule
HotpotQA & log   & linear-front     & \(-0.6771\) & \(+0.0558\) & \((+0.196,\;-0.044)\) \\
HotpotQA & Brier & linear-front     & \(-0.2420\) & \(+0.0070\) & \((+0.077,\;-0.043)\) \\
HotpotQA & Beta(2,4) & linear-front  & \(-0.0065\) & \(-0.0020\) & \((+0.003,\;-0.006)\) \\
HotpotQA & log   & uniform          & \(-0.6767\) & \(+0.1703\) & \((+0.310,\;+0.070)\) \\
HotpotQA & Brier & uniform          & \(-0.2418\) & \(+0.0519\) & \((+0.122,\;+0.002)\) \\
HotpotQA & Beta(2,4) & uniform       & \(-0.0065\) & \(-0.0003\) & \((+0.005,\;-0.004)\) \\
HotpotQA & log   & exponential-front & \(-0.6774\) & \(+0.0225\) & \((+0.161,\;-0.077)\) \\
HotpotQA & Brier & exponential-front & \(-0.2422\) & \(-0.0059\) & \((+0.064,\;-0.055)\) \\
HotpotQA & Beta(2,4) & exponential-front & \(-0.0065\) & \(-0.0025\) & \((+0.002,\;-0.006)\) \\
HotpotQA & log   & linear-back      & \(-0.6765\) & \(+0.2725\) & \((+0.434,\;+0.157)\) \\
HotpotQA & Brier & linear-back      & \(-0.2418\) & \(+0.0907\) & \((+0.171,\;+0.034)\) \\
HotpotQA & Beta(2,4) & linear-back   & \(-0.0065\) & \(+0.0013\) & \((+0.006,\;-0.002)\) \\
\bottomrule
\end{tabular}
}
\bottomtablecaption{Artificial-censoring robustness at \(r=0.75\).  The table reports the
complete-data TPS and the mean shift under \(\TPScensimple\), with the shift
also reported separately on failure and success trajectories.  StrategyQA is
negative across all rows, Tau2-Bench is positive across all rows, and HotpotQA
is an intermediate regime in which bounded or asymmetric rows near zero can
change sign.}
\label{tab:robustness_stage1}
\end{table}

\subsection{Natural Censoring on WebShop}
\label{app:robustness:webshop}

The WebShop working sample contains 308 trajectories after excluding
parse-error terminations as informative censoring: 163 completed trajectories
with observed outcomes and 145 administratively censored max-step prefixes.
The administrative-censoring rate within the working sample is \(47.08\%\).
The primary WebShop configuration in Section~\ref{sec:exp3s2} is Platt-calibrated
verbal confidence scored with log score and linear-front weights:
\[
\widehat{\mathrm{TPS}}_{\mathrm{comp}}=-0.8156,\qquad
\widehat{\mathrm{TPS}}_{\mathrm{cen\text{-}ext}}=-0.6570,\qquad
\Delta_{\mathrm{practice}}=+0.1586,
\]
with 95\% bootstrap CI \([+0.1334,+0.1879]\).

Table~\ref{tab:robustness_stage2_all} reports the full
predictor--score-family--weight-schedule sweep in compressed form.  The four
underconfident or near-base-rate Platt predictors have positive
\(\Delta_{\mathrm{practice}}\) across all score families and schedules.  The
completion-token probability predictor is the only non-reference predictor
with \(\bar F>0.5\), and it is negative across all twelve score-family--weight
cells.

\begin{table}[!htbp]
\centering
\scriptsize
\setlength{\tabcolsep}{4pt}
\resizebox{\textwidth}{!}{%
\begin{tabular}{llrrr}
\toprule
Predictor & Weight schedule &
\(\Delta\mathrm{TPS}_{\log}\) &
\(\Delta\mathrm{TPS}_{\mathrm{Brier}}\) &
\(\Delta\mathrm{TPS}_{\mathrm{Beta}(2,4)}\) \\
\midrule
Verbal confidence & linear-front & \(+0.1586\) & \(+0.0766\) & \(+0.00060\) \\
Verbal confidence & uniform & \(+0.1559\) & \(+0.0736\) & \(+0.00072\) \\
Verbal confidence & exponential-front & \(+0.1520\) & \(+0.0735\) & \(+0.00047\) \\
Verbal confidence & linear-back & \(+0.1463\) & \(+0.0707\) & \(+0.00036\) \\
\midrule
Completion entropy confidence & linear-front & \(+0.1585\) & \(+0.0776\) & \(+0.00054\) \\
Completion entropy confidence & uniform & \(+0.1573\) & \(+0.0769\) & \(+0.00054\) \\
Completion entropy confidence & exponential-front & \(+0.1567\) & \(+0.0769\) & \(+0.00047\) \\
Completion entropy confidence & linear-back & \(+0.1601\) & \(+0.0783\) & \(+0.00058\) \\
\midrule
Action-span entropy confidence & linear-front & \(+0.1623\) & \(+0.0797\) & \(+0.00057\) \\
Action-span entropy confidence & uniform & \(+0.1623\) & \(+0.0797\) & \(+0.00057\) \\
Action-span entropy confidence & exponential-front & \(+0.1623\) & \(+0.0797\) & \(+0.00057\) \\
Action-span entropy confidence & linear-back & \(+0.1623\) & \(+0.0797\) & \(+0.00057\) \\
\midrule
Action-span token-prob. confidence & linear-front & \(+0.1623\) & \(+0.0797\) & \(+0.00057\) \\
Action-span token-prob. confidence & uniform & \(+0.1618\) & \(+0.0793\) & \(+0.00056\) \\
Action-span token-prob. confidence & exponential-front & \(+0.1623\) & \(+0.0797\) & \(+0.00057\) \\
Action-span token-prob. confidence & linear-back & \(+0.1623\) & \(+0.0797\) & \(+0.00057\) \\
\midrule
Completion token probability & linear-front & \(-2.0686\) & \(-0.1175\) & \(-0.00241\) \\
Completion token probability & uniform & \(-2.0700\) & \(-0.1182\) & \(-0.00243\) \\
Completion token probability & exponential-front & \(-2.0686\) & \(-0.1175\) & \(-0.00241\) \\
Completion token probability & linear-back & \(-2.0686\) & \(-0.1175\) & \(-0.00241\) \\
\midrule
Base-rate reference & linear-front & \(+0.1623\) & \(+0.0797\) & \(+0.00057\) \\
Base-rate reference & uniform & \(+0.1623\) & \(+0.0797\) & \(+0.00057\) \\
Base-rate reference & exponential-front & \(+0.1623\) & \(+0.0797\) & \(+0.00057\) \\
Base-rate reference & linear-back & \(+0.1623\) & \(+0.0797\) & \(+0.00057\) \\
\bottomrule
\end{tabular}
}
\bottomtablecaption{WebShop natural-censoring robustness across predictors, score
families, and weight schedules.  Entries are
\(\Delta_{\mathrm{practice}}
=\widehat{\mathrm{TPS}}_{\mathrm{cen\text{-}ext}}
-\widehat{\mathrm{TPS}}_{\mathrm{comp}}\).  Positive values
mean the censored extension gives a higher score than complete-only
evaluation; negative values mean it gives a lower score.}
\label{tab:robustness_stage2_all}
\end{table}

\noindent\textbf{Overconfident predictor block.}
The completion-token probability predictor is separated because it is the only
non-reference predictor with calibrated mean forecast above \(0.5\).  Its
\(\Delta_{\mathrm{practice}}\) is negative for every score family and weight
schedule, and its magnitude changes little across schedules.  This matches the
artificial-censoring decomposition: under the failure-side branch,
overconfident censored-prefix forecasts are penalized.

\begin{table}[!htbp]
\centering
\small
\setlength{\tabcolsep}{4pt}
\resizebox{\textwidth}{!}{%
\begin{tabular}{llrrrc}
\toprule
Score & Weight schedule &
\(\widehat{\mathrm{TPS}}_{\mathrm{comp}}\) &
\(\widehat{\mathrm{TPS}}_{\mathrm{cen\text{-}ext}}\) &
\(\Delta_{\mathrm{practice}}\) & 95\% CI \\
\midrule
log & linear-front
& \(-2.5255\) & \(-4.5941\) & \(-2.0686\) & \([-2.7092,\;-1.3721]\) \\
log & uniform
& \(-2.5233\) & \(-4.5932\) & \(-2.0700\) & \([-2.7496,\;-1.3661]\) \\
log & exponential-front
& \(-2.5255\) & \(-4.5941\) & \(-2.0686\) & \([-2.8006,\;-1.4609]\) \\
log & linear-back
& \(-2.5255\) & \(-4.5941\) & \(-2.0686\) & \([-2.7303,\;-1.4200]\) \\
\midrule
Brier & linear-front
& \(-0.3091\) & \(-0.4266\) & \(-0.1175\) & \([-0.1590,\;-0.0750]\) \\
Brier & uniform
& \(-0.3081\) & \(-0.4263\) & \(-0.1182\) & \([-0.1615,\;-0.0749]\) \\
Brier & exponential-front
& \(-0.3091\) & \(-0.4266\) & \(-0.1175\) & \([-0.1577,\;-0.0734]\) \\
Brier & linear-back
& \(-0.3091\) & \(-0.4266\) & \(-0.1175\) & \([-0.1622,\;-0.0761]\) \\
\midrule
Beta(2,4) & linear-front
& \(-0.00656\) & \(-0.00897\) & \(-0.00241\) & \([-0.00304,\;-0.00182]\) \\
Beta(2,4) & uniform
& \(-0.00651\) & \(-0.00894\) & \(-0.00243\) & \([-0.00310,\;-0.00183]\) \\
Beta(2,4) & exponential-front
& \(-0.00656\) & \(-0.00897\) & \(-0.00241\) & \([-0.00307,\;-0.00183]\) \\
Beta(2,4) & linear-back
& \(-0.00656\) & \(-0.00897\) & \(-0.00241\) & \([-0.00306,\;-0.00178]\) \\
\bottomrule
\end{tabular}
}
\bottomtablecaption{WebShop natural-censoring robustness for Platt-calibrated
completion-token probability.  This predictor has \(\bar F\approx0.70\) and is
the only non-reference predictor with uniformly negative
\(\Delta_{\mathrm{practice}}\) across score families and weight schedules.}
\label{tab:robustness_stage2_tokenprob}
\end{table}

\noindent\textbf{Sign stability across predictors.}
Table~\ref{tab:robustness_stage2_sign} summarizes the sign of
\(\Delta_{\mathrm{practice}}\) across the five non-reference predictors.
Across the \(5\times3\times4=60\) non-reference predictor--score--weight
cells, 48 are positive and 12 are negative.  All 12 negative cells correspond
to completion-token probability, the only predictor in this sweep with
\(\bar F>0.5\).

\begin{table}[!htbp]
\centering
\small
\begin{tabular}{lrcccc}
\toprule
Predictor &
\(\bar F\) &
linear-front &
uniform &
exponential-front &
linear-back \\
\midrule
Verbal confidence (Platt)          & \(0.38\) & \(+\) & \(+\) & \(+\) & \(+\) \\
Completion entropy conf.\ (Platt)  & \(0.38\) & \(+\) & \(+\) & \(+\) & \(+\) \\
Action-span entropy conf.\ (Platt) & \(0.38\) & \(+\) & \(+\) & \(+\) & \(+\) \\
Action-span token prob.\ (Platt)   & \(0.38\) & \(+\) & \(+\) & \(+\) & \(+\) \\
Completion token prob.\ (Platt)    & \(0.70\) & \(-\) & \(-\) & \(-\) & \(-\) \\
\bottomrule
\end{tabular}
\bottomtablecaption{Sign of \(\Delta_{\mathrm{practice}}\) for non-reference predictors
on WebShop.  Each sign summarizes the corresponding score-family rows under
that weight schedule.  The sign is determined by the predictor's confidence
regime rather than by the choice of score family or weight schedule.}
\label{tab:robustness_stage2_sign}
\end{table}

\noindent\textbf{Margin sensitivity relative to the base-rate reference.}
Table~\ref{tab:robustness_stage2_margin} reports the log-score margin of
verbal confidence over the base-rate reference.  The censored extension
reduces the margin under every weight schedule.  Under exponential-front
weights, the margin changes sign.

\begin{table}[!htbp]
\centering
\small
\begin{tabular}{lrrrr}
\toprule
Weight schedule &
\(\widehat{\mathrm{TPS}}_{\mathrm{comp}}-\widehat{\mathrm{TPS}}_{\mathrm{base}}\) &
\(\widehat{\mathrm{TPS}}_{\mathrm{cen\text{-}ext}}-\widehat{\mathrm{TPS}}_{\mathrm{base}}\) &
Shrinkage &
Rank change? \\
\midrule
linear-front       & \(+0.0123\) & \(+0.0086\)  & \(-30\%\) & No \\
uniform            & \(+0.0301\) & \(+0.0237\)  & \(-21\%\) & No \\
exponential-front  & \(+0.0079\) & \(-0.0025\)  & rank flip & Yes \\
linear-back        & \(+0.0476\) & \(+0.0316\)  & \(-34\%\) & No \\
\bottomrule
\end{tabular}
\bottomtablecaption{Verbal-confidence margin over the base-rate reference under log
score.  The censored extension reduces the margin under every weight schedule;
under exponential-front weights, the margin changes sign.}
\label{tab:robustness_stage2_margin}
\end{table}

\section{Beta-family Scoring Rules}
\label{app:beta}

This appendix records the strict-propriety argument for the beta-family score
used in Section~\ref{sec:prelim}, describes its boundary behavior, and
explains how the parameters \((\alpha,\beta)\) act as cost-shaping choices.
It also notes how the same score can be used as a calibration loss under a
fixed evaluation law.

Throughout this appendix, \(S_{\alpha,\beta}\) denotes the per-step binary
score, while \(\TPS\) denotes its trajectory-level weighted lift.

\subsection{Strict propriety of the beta family}
\label{app:beta:proof}

Recall that the beta-family score is
\[
  S_{\alpha,\beta}(p,1)
  =
  -\int_p^1 c^{\alpha-1}(1-c)^\beta\,dc,
  \qquad
  S_{\alpha,\beta}(p,0)
  =
  -\int_0^p c^\alpha(1-c)^{\beta-1}\,dc .
\]

\begin{proof}[Strict propriety]
The score is defined by the Schervish--Buja threshold-mixture construction
with mixing measure
\[
  \nu(dc)=c^{\alpha-1}(1-c)^{\beta-1}\,dc .
\]
For \(\alpha>0\) and \(\beta>0\), this density is strictly positive on
every open subinterval of \((0,1)\).  Hence the mixing measure has full
support on \((0,1)\).  By Schervish's characterization of binary proper
scores, a threshold-mixture score is strictly proper if and only if the
mixing measure assigns positive mass to every open subinterval of
\((0,1)\).  Therefore \(S_{\alpha,\beta}\) is strictly proper for all
\(\alpha,\beta>0\).

The Brier score corresponds to \(\alpha=\beta=1\), up to positive affine
equivalence:
\[
  S_{1,1}(p,1)=-\frac{(1-p)^2}{2},
  \qquad
  S_{1,1}(p,0)=-\frac{p^2}{2}.
\]
Thus maximizing \(S_{1,1}\) is equivalent to minimizing the usual Brier
loss.

The logarithmic endpoint is obtained as the limiting endpoint
\(\alpha=\beta\to0\).  We treat it as the standard binary log score,
\[
  S_{\log}(p,1)=\log p,
  \qquad
  S_{\log}(p,0)=\log(1-p),
\]
which is strictly proper by direct verification.  If
\(Y\mid\Hcal_t\sim\mathrm{Bernoulli}(q_t)\), then the conditional expected
log score is
\[
  q_t\log p+(1-q_t)\log(1-p).
\]
Its derivative is
\[
  \frac{q_t}{p}-\frac{1-q_t}{1-p},
\]
which vanishes uniquely at \(p=q_t\), and its second derivative is negative
on \((0,1)\).  Hence the log score is strictly proper.
\end{proof}

\subsection{Boundary behavior}
\label{app:beta:boundary}

The log score is unbounded at the wrong boundary:
\[
  S_{\log}(p,1)\to-\infty \quad \text{as } p\to0,
  \qquad
  S_{\log}(p,0)\to-\infty \quad \text{as } p\to1.
\]
In implementations, we therefore clip reported probabilities to
\([\varepsilon,1-\varepsilon]\), with \(\varepsilon=10^{-6}\), before
computing any log score.

Interior beta-family members are bounded on \([0,1]\).  The two wrong-boundary
floors are finite:
\[
  S_{\alpha,\beta}(0,1)
  =
  -\int_0^1 c^{\alpha-1}(1-c)^\beta\,dc
  =
  -B(\alpha,\beta+1),
\]
and
\[
  S_{\alpha,\beta}(1,0)
  =
  -\int_0^1 c^\alpha(1-c)^{\beta-1}\,dc
  =
  -B(\alpha+1,\beta),
\]
where \(B(\cdot,\cdot)\) is the beta function.  Thus, unlike the log score,
interior beta-family members cap the penalty for near-certain wrong
forecasts.

This distinction matters empirically because agentic confidence streams can
be saturated.  Verbal-confidence predictors, for example, often cluster near
\(0.95\) or \(1.00\).  Log score penalizes such saturation sharply when the
trajectory fails, whereas bounded beta-family members saturate at a finite
penalty.  This is why the experiments report log as the canonical unbounded
default, Brier as the bounded symmetric baseline, and Beta(2,4) as a bounded
asymmetric sensitivity check.

The boundedness of interior beta-family members also explains the scope of
the approximate-propriety statement in Remark~\ref{rem:regret}.  The contrast
\[
  D(p)=S_{\alpha,\beta}(p,1)-S_{\alpha,\beta}(p,0)
\]
is bounded for \(\alpha,\beta>0\), but not for the log endpoint.  Hence finite
plug-in error bounds for misspecified censored weights apply cleanly to
interior beta-family members and require additional clipping or boundedness
conditions for log score.

\subsection{Cost-shaping interpretation}
\label{app:beta:cost}

The parameters \((\alpha,\beta)\) do not determine whether the score is
proper.  Strict propriety follows from full support of the mixing measure.
Instead, \((\alpha,\beta)\) determine how the proper score weights different
threshold mistakes.

In the Schervish threshold-mixture representation, each threshold
\(c\in(0,1)\) can be viewed as a cost-sensitive threshold rule. The rule
penalizes a forecast according to whether the reported probability falls on
the wrong side of threshold \(c\).  The beta-family mixing measure
\[
  \nu(dc)\propto c^{\alpha-1}(1-c)^{\beta-1}\,dc
\]
controls how much weight is assigned to each threshold region.

Smaller \(\alpha\) relative to \(\beta\) shifts mass toward lower thresholds.
This places more weight on mistakes associated with reporting success
probabilities above thresholds that the eventual failed trajectory does not
justify.  In agentic settings, this is the overconfident-success regime:
the agent reports a high probability of eventual success, but the trajectory
fails.  Such errors can delay reflection, deferral, or human handoff.

Larger \(\alpha\) relative to \(\beta\) shifts weight toward higher thresholds
and can emphasize underconfidence on successful trajectories.  This may be
appropriate when unnecessary interventions are costly and the system should
be encouraged to proceed when success is likely.  The Brier score,
corresponding to \(\alpha=\beta=1\) up to positive affine equivalence, is the
symmetric bounded case.

Beta(2,4) provides a bounded asymmetric instance of the beta family.  Together
with log and Brier, it lets us compare an unbounded default, a bounded
symmetric score, and a bounded cost-shaped score.

\subsection{Proper scoring rules as calibration losses}
\label{app:beta:loss}

A strictly proper scoring rule can also be used as a calibration-aware loss.
Under the reward orientation used in this paper, the corresponding
trajectory-level loss is the negated score:
\[
  \mathcal L_{\mathrm{traj},\alpha,\beta}(F_{1:T},Y)
  =
  -\sum_{t=1}^{T} w_t\,S_{\alpha,\beta}(F_t,Y).
\]
For complete trajectories, minimizing this expected loss is equivalent to
maximizing the expected \(\TPS\).  By Theorem~\ref{thm:complete}, the unique
population optimum is
\[
  F_t=q_t
  =
  \Pp(Y=1\mid\Hcal_t)
  \quad\text{for every }t.
\]
Thus the same mathematical object can be used either as an evaluator of an
existing uncertainty stream or as a loss for training a confidence head,
post-hoc calibrator, or uncertainty probe under a fixed policy.

This differs from optimizing trajectory ECE.  An optimizer minimizing ECE is
rewarded for marginal calibration within bins, not for the resolution that
proper scoring rules require.  It can therefore fail to recover the full
prefix-conditioned probability process even when binwise calibration error is
small.

The \((\alpha,\beta)\) parameters provide a risk-preference interface for the
loss:
\begin{itemize}
  \item Overconfident success predictions on failed trajectories
        (\(p\to1\), \(Y=0\)) can delay reflection, deferral, or human
        handoff.  A bounded asymmetric member with \(\alpha<\beta\) can
        emphasize this regime while avoiding the unbounded log penalty.

  \item Excessive underconfidence on successful trajectories
        (\(p\to0\), \(Y=1\)) can trigger unnecessary interventions.  A choice
        with \(\alpha>\beta\) can emphasize this regime.

  \item When the two directions are treated symmetrically, Brier
        \((\alpha=\beta=1)\) and log are natural defaults, with Brier bounded
        and log unbounded.
\end{itemize}

\noindent\textbf{Distinction from policy optimization.}
Using \(-\TPS\) as a calibration loss is not the same as using it as a
policy-optimization or RLHF reward.  The propriety theorem fixes the
evaluation law and scores forecasts under that law.  If the policy itself is
optimized against the score, it may change the distribution of histories being
evaluated, conflating policy improvement with calibration.  We therefore use
\(-\TPS\) as a loss for confidence heads, post-hoc calibrators, or uncertainty
probes under a fixed policy, not as a standalone RLHF reward.

\noindent\textbf{Censored trajectories.}
For complete trajectories, \(-\TPS_{\alpha,\beta}\) is a valid calibration
loss for any strictly proper beta-family member.  For censored trajectories,
the exact reduced censored loss requires the continuation-success weight
\(q_Z\):
\[
  q_Z=\Pp(Y=1\mid\Hcal_Z).
\]
When \(q_Z\) is available or consistently estimated, the exact reduced score
in Eq.~\eqref{eq:cen_exact} can be negated and used as the corresponding
censored calibration loss.  When \(q_Z\) is unavailable, the simple
failure-side approximation \(q_Z\approx0\) gives
\(-\TPScensimple_{\alpha,\beta}\).  As
Proposition~\ref{prop:cen_simple_target} shows, this simple loss is proper
for the pseudo-label target
\[
  m_t=\Pp(\Delta=1,Y=1\mid\Hcal_t),
\]
not for the original continuation-success target \(q_t\), unless the missing
success mass is zero.  Thus \(\TPScensimple\) is an operational approximation,
while \(\TPScenexact\) is the strictly proper censored score.

\section{Implementation Details}
\label{app:implementation}

\noindent\textbf{Model and inference.}
All trajectories were collected from \texttt{google/gemma-4-31b-it}
accessed through OpenRouter~\citep{Gemmateam2025gemma4}. Token-level
log-probabilities at every step are the raw materials for the completion-token
and action-span predictors.
Table~\ref{tab:impl_model} summarizes decoding settings;
all values are shared across datasets.

\begin{table}[h]
\centering
\small
\begin{tabular}{lc}
\toprule
Hyperparameter & StrategyQA / Tau2 / HotpotQA / WebShop \\
\midrule
\texttt{temperature}             & 0.2      \\
\texttt{top\_p}                  & 1.0      \\
\texttt{top\_logprobs}           & 5        \\
\texttt{max\_completion\_tokens} & 64{,}000 \\
\texttt{reasoning}               & enabled  \\
\bottomrule
\end{tabular}
\bottomtablecaption{Decoding hyperparameters.}
\label{tab:impl_model}
\end{table}

\noindent\textbf{Agent harness.}
All benchmarks used an AUQ-UAM-style ReAct harness~\citep{Yao2023react}
eliciting four structured fields at every step: \texttt{<think>}, \texttt{<action>},
\texttt{<confidence>}, and \texttt{<explanation>}. The
\texttt{<confidence>} field is the verbal-confidence predictor. Up to two
repair attempts were allowed per step before a parse-error termination was
recorded.

\noindent\textbf{Missing predictor values.}
We do not impute missing per-step forecasts. Malformed or missing structured
confidence fields are handled by the harness retry logic; if parsing still
fails, the episode is assigned stop reason \texttt{parse\_error} and is
excluded before predictor construction. For residual predictor-level
missingness, the raw predictor may record a missing value, but the evaluator
requires a numeric stepwise stream at every scored prefix. A trajectory with
a nonnumeric value is skipped for that predictor row rather than partially
scored or filled with a default. Thus each reported \(\TPS\) row evaluates
complete numeric forecast streams for that predictor; any predictor-specific
sample variation is part of the reported transparency convention.
 
\noindent\textbf{Datasets and run settings.}
Table~\ref{tab:impl_datasets} summarizes per-dataset configurations. The
Tau2 user simulator (Qwen~2.5~7B Instruct) is part of the
benchmark interaction environment; it is not the uncertainty-emitting agent
scored by \(\TPS\). All sampling used seed 42.
Tau2-Bench~\citep{Barres2025tau2bench} uses a \(90\)-task
\texttt{base}-split sample from the airline/retail/telecom domains.
StrategyQA~\citep{Geva2021strategyqa} uses the full
\texttt{wics/strategy-qa} test split. HotpotQA~\citep{Yang2018hotpotqa} uses a
\(2000\)-task sample of the \texttt{fullwiki} validation split.
WebShop~\citep{Yao2023webshop} uses the \(500\)-task fixed evaluation subset.

\begin{table}[h]
\centering
\small
\setlength{\tabcolsep}{5pt}
\begin{tabular}{lrrrrl}
\toprule
Dataset & Tasks & Completed & Max steps & Seed & Retrieval / action interface \\
\midrule
StrategyQA & 2{,}290 & 2{,}229 & 16 & 42 & Local ES (top-$k{=}3$, 420 char obs.) \\
HotpotQA   & 2{,}000 & 1{,}529 &  7 & 42 & Local ES (top-$k{=}5$, 420 char obs.) \\
Tau2-Bench &    90   &   201   & 50 & 42 & Benchmark-native tools; 120-step env.\ cap \\
WebShop    &   500   &   163   & 30 & 42 & Text env.; 1{,}000-product catalog \\
\bottomrule
\end{tabular}
\bottomtablecaption{Per-dataset run configuration. Tau2 reports 201 completed from a
90-task base split (multiple episodes per task). WebShop working sample is 308
(163 completed \(+\) 145 admin-censored) after excluding 192 parse-error
terminations.}
\label{tab:impl_datasets}
\end{table}

\noindent\textbf{System prompts.}
The four prompts below are reproduced verbatim. Across all benchmarks, the
\texttt{<confidence>} field is defined identically: a number in $[0,1]$
representing the probability that the task will eventually be solved
correctly from the current state.

\vspace{1.5em}
\noindent\textit{StrategyQA}

\begin{tcolorbox}[promptbox]
You are a question-answering agent. You solve yes/no questions by searching a Wikipedia-backed corpus.

You may use two tools:\\
1. Search[query]: retrieval from the configured Wikipedia backend. Use it to discover a relevant page/paragraph and load current passage context.\\
2. Lookup[keyword]: local context scan only (no network). It scans the currently loaded passage from the last Search and returns a matching span.

When you have enough information, end with: Finish[yes] or Finish[no]

At every step, use this exact format:

<think>your reasoning about what to do next</think>\\
<action>Search[...] or Lookup[...] or Finish[yes/no]</action>\\
<confidence>0.XX</confidence>\\
<explanation>one sentence explaining your confidence</explanation>

Rules:\\
- confidence is a number between 0.0 and 1.0 representing the probability that this task will eventually be solved correctly from the current state.\\
- Always output all four tags: think, action, confidence, explanation.\\
- Use Lookup only after Search has loaded a passage context.\\
- Prefer Lookup on the current passage before issuing another Search.\\
- The final answer must be exactly yes or no.\\
- explanation must not be empty.\\
- Be concise.
\end{tcolorbox}

\medskip
\noindent\textit{HotpotQA}

\begin{tcolorbox}[promptbox]
You are a question-answering agent. You solve multi-hop questions by searching a Wikipedia-backed corpus.

You may use three tools:\\
1. Search[query]: retrieval from the configured Wikipedia backend. Use it to discover relevant pages/passages and load context.\\
2. Lookup[keyword]: local context scan only (no network). It scans the currently loaded passage and returns a matching span.\\
3. Finish[answer]: terminate with a free-form final answer string.

At every step, use this exact format:

<think>your reasoning about what to do next</think>\\
<action>Search[...] or Lookup[...] or Finish[answer]</action>\\
<confidence>0.XX</confidence>\\
<explanation>one sentence explaining your confidence</explanation>

Rules:\\
- confidence is a number between 0.0 and 1.0 representing the probability that this task will eventually be solved correctly from the current state.\\
- Always output all four tags: think, action, confidence, explanation.\\
- Use concise targeted search queries; multi-hop questions often require evidence from multiple pages.\\
- Use Lookup after Search to extract precise spans from current context.\\
- The final answer should be short and direct.\\
- explanation must not be empty.\\
- Be concise.
\end{tcolorbox}

\medskip
\noindent\textit{Tau2-Bench}

\begin{tcolorbox}[promptbox]
You are a customer-support agent operating in the tau2-bench domain: \{TAU2\_DOMAIN\}.

Follow this domain policy exactly:\\
<domain\_policy>\\
\{TAU2\_POLICY\}\\
</domain\_policy>

You may only use these benchmark-native tools:\\
<tool\_catalog>\\
\{TAU2\_TOOL\_SPECS\}\\
</tool\_catalog>

At every step, output exactly four tags:\\
<think>brief reasoning</think>\\
<action>Tool[tool\_name, \{...valid JSON object...\}] or Respond["..."]</action>\\
<confidence>0.XX</confidence>\\
<explanation>one sentence</explanation>

Rules:\\
- Use Tool[...] only with a listed tool name and a valid JSON object for arguments.\\
- Use Respond["..."] to send a plain-text message to the user.\\
- Output exactly one Action per step.\\
- confidence must be a number between 0.0 and 1.0.\\
- explanation must be a non-empty short sentence explaining your confidence.\\
- Output only these four tags and no extra fields.\\
- Keep think concise and task-focused.
\end{tcolorbox}

\medskip
\noindent\textit{WebShop}

\begin{tcolorbox}[promptbox]
You are solving a WebShop text-environment task from split: \{WEBSHOP\_SPLIT\}.

At every step, output exactly four tags:\\
<think>brief reasoning</think>\\
<action>Search[query] or Click[target]</action>\\
<confidence>0.XX</confidence>\\
<explanation>one non-empty sentence explaining your confidence</explanation>

Rules:\\
- The action must be exactly one of: Search[...] or Click[...].\\
- Use Search[...] to submit a search query when the search bar is available.\\
- Use Click[...] with a target that exactly matches one of the provided clickable targets.\\
- Always emit an <action> tag, even when uncertain.\\
- Do not emit any other action formats (no Tool[...], Respond[...], Finish[...], JSON, or plain text actions).\\
- confidence must be a number between 0.0 and 1.0.\\
- explanation must be non-empty.\\
- Do not output extra tags or extra text outside the four required tags.\\
- Do not repeat search plans or narrate multiple alternatives; choose the single next action.
\end{tcolorbox}

\noindent\textbf{Reproducibility.}
Predictor extraction formulas are in Appendix~\ref{app:transparency};
parse-error exclusion rules and termination categorization are in
Appendix~\ref{app:parse_error}.

\section{Existing Assets and Licenses}
\label{app:assets}

We used only existing model, benchmark, retrieval, and software assets for
evaluation, and we do not introduce a new dataset, model, or benchmark as a
contribution. The language-model trajectories were generated with
Gemma~4~31B accessed through OpenRouter; the hugging face model card
lists the Gemma~4~31B model under the Apache~2.0 license. The
StrategyQA benchmark is used under the MIT license reported by its official
repository. Tau2-Bench is used under the MIT license reported by its
repository. HotpotQA is used under the CC BY-SA~4.0 license reported by the
benchmark website. WebShop is used under the MIT license reported by its
official repository. The Wikipedia text indexed for local retrieval is used
under Wikipedia's content terms, including CC BY-SA~4.0 and GFDL. These
assets are used only for evaluation and retrieval in the reported experiments.

\newpage
\end{document}